\newtheorem{problem}{Problem}
\newtheorem{theorem}{Theorem}[section]
\DeclareMathOperator*{\argmax}{argmax}
\DeclareMathOperator{\E}{\mathbb{E}}
\newcommand{\tuple}[1]{\ensuremath{\left \langle #1 \right \rangle }}
\begin{document}
\sloppy
\title{Using Social Networks to Aid Homeless Shelters:\\
Dynamic Influence Maximization under Uncertainty - An Extended Version
}

\author{Amulya Yadav, Hau Chan$^1$, Albert Jiang$^1$, Haifeng Xu, Eric Rice, Milind Tambe\\
University of Southern California, Los Angeles, CA 90089\\
$^1$Trinity University, San Antonio, TX 78212\\
\{amulyaya, haifengx, ericr, tambe\}@usc.edu  $^1$\{hchan, xjiang\}@trinity.edu
}

\maketitle
\begin{abstract}
This paper presents HEALER, a software agent that recommends sequential intervention plans for use by homeless shelters, who organize these interventions to raise awareness about HIV among homeless youth. HEALER's sequential plans (built using knowledge of social networks of homeless youth) choose intervention participants strategically to maximize influence spread, while reasoning about uncertainties in the network. While previous work presents influence maximizing techniques to choose intervention participants, they do not address three real-world issues: (i) they \textit{completely fail} to scale up to real-world sizes; (ii) they do not handle deviations in execution of intervention plans; (iii) constructing real-world social networks is an expensive process. HEALER handles these issues via four major contributions: (i) HEALER casts this influence maximization problem as a POMDP and solves it using a novel planner which scales up to previously unsolvable real-world sizes; (ii) HEALER allows shelter officials to modify its recommendations, and updates its future plans in a deviation-tolerant manner; (iii) HEALER constructs social networks of homeless youth at low cost, using a Facebook application. Finally, (iv) we show hardness results for the problem that HEALER solves. HEALER will be deployed in the real world in early Spring 2016 and is currently undergoing testing at a homeless shelter.
\end{abstract}

\category{I.2.11}{Distributed Artificial Intelligence}{Multiagent systems}

\terms{Algorithms, HIV Prevention}

\keywords{POMDP; Influence Maximization; Social Networks; Multi-Step planning}

\section{Introduction}\label{sec:intro}
HIV-AIDS kills 2 million people worldwide every year \cite{unaids}. In USA alone, AIDS kills around 10,000 people per annum \cite{cdc}. HIV has an extremely high incidence among homeless youth, as they are more likely to engage in high HIV-risk behaviors (e.g., unprotected sexual activity, injection drug use) than other sub-populations. In fact, previous studies show that homeless youth are at 10X greater risk of HIV infection than stably housed populations \cite{nchc}. Thus, any attempt at eradicating HIV crucially depends on our success at minimizing rates of HIV infection among homeless youth.

As a result, many homeless shelters organize intervention camps for homeless youth in order to raise awareness about HIV prevention and treatment practices. These intervention camps consist of day-long educational sessions in which the participants are provided with information about HIV prevention measures \cite{rice2012mobilizing}.

However, due to financial/manpower constraints, the shelters can only organize a limited number of intervention camps. Moreover, in each camp, the shelters can only manage small groups of youth ($\sim$3-4) at a time (as emotional and behavioral problems of youth makes management of bigger groups difficult). Thus, the shelters prefer a series of small sized camps organized sequentially \cite{rice2012}. As a result, the shelter cannot intervene on the entire target (homeless youth) population. Instead, it tries to maximize the spread of awareness among the target population (via word-of-mouth influence) using the limited resources at its disposal. To achieve this goal, the shelter uses the friendship based social network of the target population to strategically choose the participants of their limited intervention camps. Unfortunately, the shelters' job is further complicated by a lack of complete knowledge about the social network's structure \cite{rice2010positive}. Some friendships in the network are known with certainty whereas there is uncertainty about other friendships. 

Thus, the shelters face an important challenge: they need a sequential plan to choose the participants of their sequentially organized interventions. This plan must address four key points: (i) it must deal with network structure uncertainty;  (ii) it needs to take into account new information uncovered during the interventions, which reduces the uncertainty in our understanding of the network; (iii) the plan needs to be deviation tolerant, as sometimes homeless youth may refuse to be an intervention participant, thereby forcing the shelter to modify its plan; (iv) the intervention approach should address the challenge of gathering information about social networks of homeless youth, which usually costs thousands of dollars and many months of time \cite{rice2012}. 


In this paper, we model the shelters' problem by introducing the \textit{\textbf{D}ynamic \textbf{I}nfluence \textbf{M}aximization under Unc\textbf{e}rtainty} (or DIME) problem. The sequential selection of intervention participants under network uncertainty in DIME sets it apart from any other previous work on influence maximization, which mostly focuses on single shot choices \cite{Borgs14,tang2014influence,kempe2003maximizing,leskovec2007cost}. Additionally, in previous work, PSINET \cite{yadav2015preventing}, a POMDP based tool, was proposed for solving this problem, but it has three limitations. First, PSINET completely fails to scale up to the problem's requirements; running slowly and out of memory. It runs very slowly for moderate-sized networks, and runs out of memory as the network is scaled up. Worse still, even on these moderate sized networks, it runs out of memory when the number of participants in an intervention are increased (as shown later). Second, PSINET did not explicitly allow officials to modify its recommended plans if some participants refuse to attend the intervention. Third, PSINET requires entire social networks of homeless youth as input, while homeless shelters lack the money/time/manpower required to generate these input networks.  

In this paper, we build a new software agent, HEALER (\textbf{H}ierarchical \textbf{E}nsembling based \textbf{A}gent which p\textbf{L}ans for \textbf{E}ffective \textbf{R}eduction in HIV Spread), to provide an end-to-end solution to the DIME problem. HEALER addresses PSINET's shortcomings via four contributions. First, HEALER casts the DIME problem as a Partially Observable Markov Decision Process (POMDP) and solves it using HEAL (\textbf{H}ierarchical \textbf{E}nsembling \textbf{A}lgorithm for p\textbf{L}anning), a novel POMDP planner which quickly generates high-quality recommendations (of intervention participants) for homeless shelter officials. HEAL uses a hierarchical ensembling heuristic to ensure low memory utilization, thereby enabling scale up. HEAL hierarchically subdivides our \textit{original POMDP} at two layers: (i) In the top layer, graph partitioning techniques are used to divide the \textit{original POMDP} into \textit{intermediate POMDPs}; (ii) In the second level, each of these \textit{intermediate POMDPs} is further simplified by sampling uncertainties in network structure repeatedly to get \textit{sampled POMDPs}; (iii) Finally, we use aggregation techniques to combine the solutions to these simpler POMDPs, in order to generate the overall solution for the \textit{original POMDP}. Our simulations show that even on small settings, HEAL achieves a 100X speed up and 70\% improvement in solution quality over PSINET; and on larger problems \textit{where PSINET is unable to run at all}, HEAL continues to provide high quality solutions quickly. Second, HEALER tolerates deviations in execution of intervention plans, as it periodically receives feedback from shelter officials about executed plans, reasons about any deviations from its recommended plans, and updates its plan accordingly to maximize solution quality. Third, HEALER quickly gathers information about the homeless youth social network (at low cost) by interacting with youth via a Facebook application. Fourth, we analyze several novel theoretical aspects of the DIME problem, which illustrates its hardness.

\begin{figure}[htp]
\subfloat[\small Computers at Homeless Shelter where HEALER is deployed]{\includegraphics[height=0.9in,width=0.48\columnwidth]{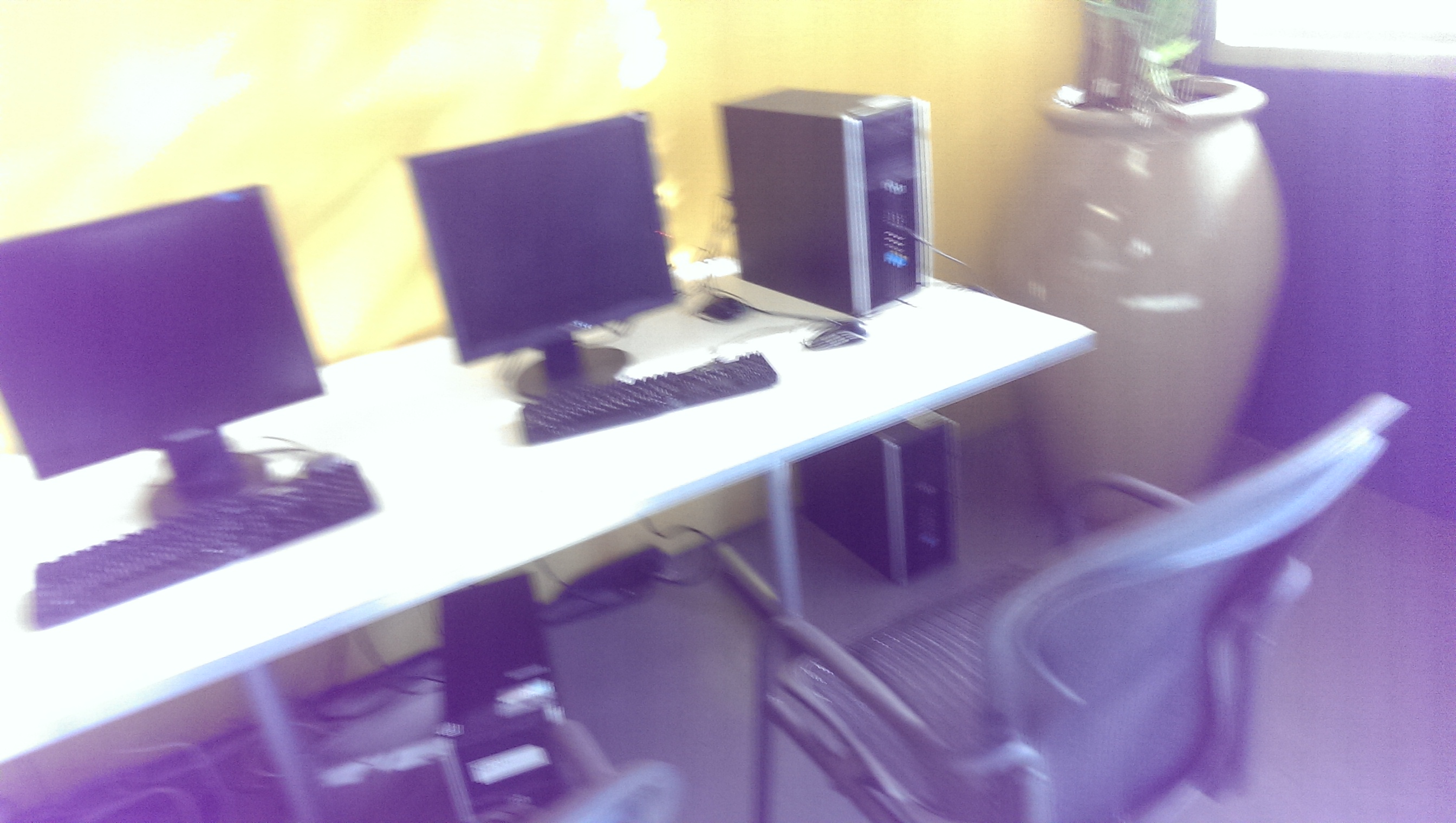}\label{fig:shelter1}}
\hspace{2mm}
\subfloat[\small Emergency Resource Shelf at the Homeless Shelter]{\includegraphics[height=0.9in,width=0.48\columnwidth]{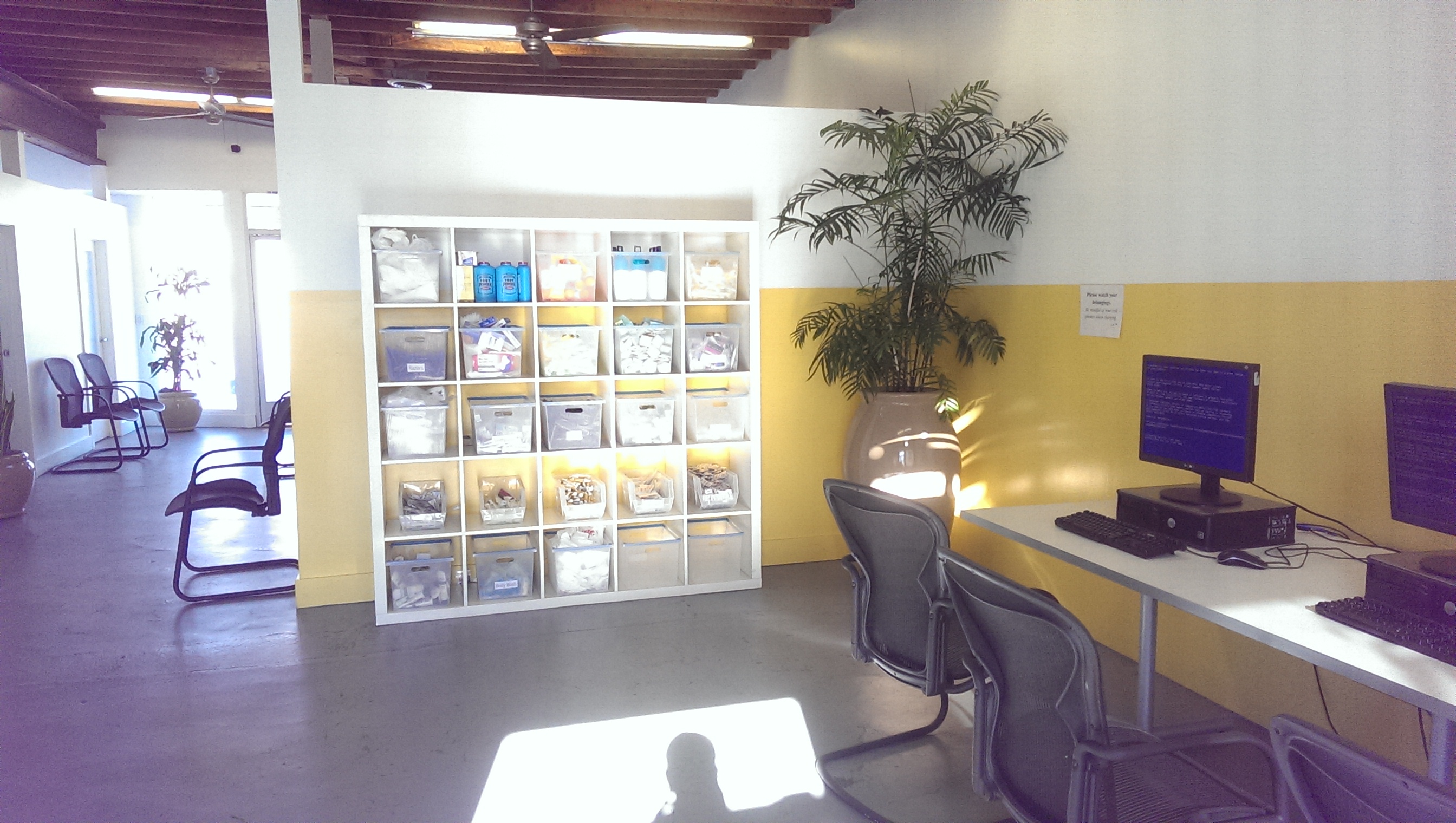}\label{fig:shelter2}}
\caption{\small Facilities at our Collaborating Homeless Shelter}
\end{figure}

We deploy HEALER in a real-world pilot study, in collaboration with a homeless shelter (name withheld for anonymity), which provides food and lodging to homeless youth aged 12-25. They provide these facilities for $\sim$55-60 homeless youth every day. They also operate an on-site medical clinic where free HIV and Hepatitis-C testing is provided. HEALER has been reviewed by officials at our collaborating homeless shelter and their feedback has been positive. We are currently preparing to register 100 youth in our deployment of HEALER at this shelter. To the best of our knowledge, this pilot study represents the first real-world evaluation of such sequential influence maximization algorithms. We expect deployment to commence in early Spring 2016.

\section{Related work}
First, we discuss work related to influence maximization. There are many algorithms for finding `seed sets' of nodes to maximize influence spread in networks \cite{kempe2003maximizing,leskovec2007cost,Borgs14,tang2014influence}. However, all these algorithms assume \textit{no uncertainty in the network structure} and select a single seed set. In contrast, we select several seed sets sequentially in our work to select intervention participants. Also, our problem takes into account uncertainty about the network structure and influence status of network nodes (i.e., whether a node is influenced or not). Finally, unlike \cite{kempe2003maximizing,leskovec2007cost,Borgs14,tang2014influence}, we use a different diffusion model as we explain later. Golovin et. al. \cite{golovin2011adaptive} introduced adaptive submodularity and discussed adaptive sequential selection (similar to our problem), and they proved that a Greedy algorithm has a $(1-1/e)$ approximation guarantee. However, unlike our work, they assume no uncertainty in network structure. Also, while our problem can be cast into the adaptive stochastic optimization framework of \cite{golovin2011adaptive}, our influence function is not adaptive submodular (see Section \ref{sec:DIME}), because of which their Greedy algorithm loses its approximation guarantees. 

Next, we discuss literature from \textit{social work}. The general approach to these interventions is to use  Peer Change Agents (PCA) (i.e., peers who bring about change in attitudes) to engage homeless youth in interventions, but most studies don't use network characteristics to choose these PCAs \cite{schneider2015new}. A notable exception is Valente et. al. \cite{valente2007identifying}, who proposed selecting intervention participants with highest \textit{degree centrality} (the most ties to other homeless youth). However, previous studies \cite{cohen2014sketch,yadav2015preventing} show that \textit{degree centrality} performs poorly, as it does not account for potential overlaps in influence of two high degree centrality nodes.


The final field of related work is planning for reward/cost optimization. We only focus on the literature on Monte-Carlo (MC) sampling based online POMDP solvers since this approach allows significant scale-up \cite{ross2008online}. The POMCP solver \cite{silver2010monte}  uses Monte-Carlo UCT tree search in online POMDP planning. Also, Somani et. al. \cite{somani2013despot} present the DESPOT algorithm, that improves the worst case performance of POMCP. Our initial experiments with POMCP and DESPOT showed that they run out of memory on even our small sized networks. A recent paper \cite{yadav2015preventing} introduced PSINET-W, a MC sampling based online POMDP planner. We have discussed PSINET's shortcomings in Section \ref{sec:intro} and how we remedy them. In particular, as we show later, HEALER scales up whereas PSINET fails to do so. \textit{HEALER's algorithmic approach also offers significant novelties in comparison with PSINET (see Section \ref{sec:alg})}. Further, a recent paper \cite{leo2016aamas} looks at an extension of the same problem by considering the case that not all nodes in the network are known ahead of time (as opposed to our work where we only assume that some edges are not known ahead of time). However, unlike our work, they do not consider sequential selection of node subsets.


\section{HEALER's Design}
We now explain the high-level design of HEALER. It consists of two major components: (i) a Facebook application for gathering information about social networks; and (ii) a DIME Solver, which solves the DIME problem (introduced in Section \ref{sec:DIME}). We first explain HEALER's components and then explain HEALER's design.

\textbf{Facebook Application: }HEALER gathers information about social ties in the homeless youth social network by interacting with youth via a Facebook application. We choose Facebook for gathering information as Young et. al. \cite{young2011online} show that $\sim$80\% of homeless youth are active on Facebook. Once a fixed number of homeless youth register in the Facebook application, HEALER parses the Facebook contact lists of all the registered homeless youth and generates the social network between these youth. HEALER adds a link between two people, if and only if both people are (i) friends on Facebook; and (ii) are registered in its Facebook application. Unfortunately, there is \textit{uncertainty} in the generated network as friendship links between people who are only friends in real-life (and not on Facebook) are not captured by HEALER. 

Previously, homeless shelters gathered this social network information via tedious face-to-face interviews with homeless youth, a process which cost thousands of dollars and many months of time. HEALER's Facebook application allows homeless shelters to quickly generate a (partial) homeless youth social network at low cost. This Facebook application has been tested by our collaborating homeless shelter with positive feedback.

\textbf{DIME Solver: }The DIME Solver then takes the approximate social network (generated by HEALER's Facebook application) as input and solves the DIME problem (formally defined in Section \ref{sec:DIME}) using our new algorithm (explained in Section \ref{sec:alg}). HEALER provides the solution of this DIME problem as a series of recommendations (of intervention participants) to homeless shelter officials. 

\textbf{HEALER Design: }HEALER's design (shown in Figure \ref{fig:HEALER}), begins with the Facebook application constructing an \textit{uncertain} network (as explained above). HEALER has a \textit{sense-reason-act} cycle;  where it repeats the following process for $T$ interventions.

\begin{figure}[t]
\center{\includegraphics[width=70mm]
{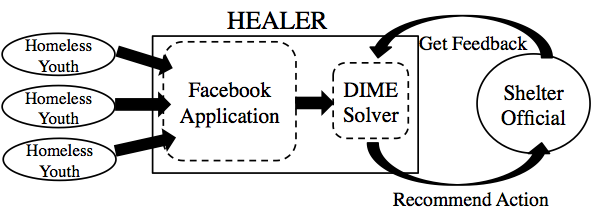}}
\caption{\label{fig:HEALER} HEALER's Design}
\end{figure}

It \textit{reasons} about different long-term plans to solve the DIME problem, it \textit{acts} by providing DIME's solution as a recommendation (of intervention participants) to homeless shelter officials. The officials may choose to not use HEALER's recommendation in selecting their intervention's participants. Upon the intervention's completion, HEALER \textit{senses} feedback about the conducted intervention from the officials. This feedback includes new observations about the network, e.g., uncertainties in some links may be resolved as intervention participants are interviewed by the shelter officials (explained more in Section \ref{sec:DIME}). HEALER uses this feedback to update and improve its future recommendations.

\section{Network Generation}\label{sec:deploy}
First, we explain our model for influence spread in \textit{uncertain social networks}. Then, we describe how HEALER generates a social network using its' Facebook application.

\subsection{Background}\label{sec:prelims}
We represent social networks as directed graphs (consisting of \textit{nodes} and \textit{directed edges}) where each \textit{node} represents a person in the social network and a \textit{directed edge} between two nodes $A$ and $B$ (say) represents that node $A$ \textit{considers} node $B$ as his/her friend. \textit{We assume directed-ness of edges as sometimes homeless shelters assess that the influence in a friendship is very much uni-directional; and to account for uni-directional follower links on Facebook}. Otherwise friendships are encoded as two uni-directional links.

 
\textbf{Uncertain Network}: The uncertain network is a directed graph $G=(V,E)$  with $|V| = N$ nodes and $|E| = M$ edges. The edge set $E$ consists of two disjoint subsets of edges: $E_c$ (the set of certain edges, i.e., friendships which we are certain about) and $E_u$ (the set of uncertain edges, i.e., friendships which we are uncertain about). Note that uncertainties about friendships exist because HEALER's Facebook application misses out on some links between people who are friends in real life, but not on Facebook.


To model the uncertainty about missing edges, every uncertain edge $e \in E_u$ has an existence probability $u(e)$ associated with it, which represents the likelihood of ``existence" of that uncertain edge. For example, if there is an uncertain edge $(A,B)$ (i.e., we are unsure whether node $B$ is node $A$'s friend), then $u(A,B) = 0.75$ implies that $B$ is $A$'s friend with a 0.75 chance. In addition, each edge $e \in E$ (both certain and uncertain) has a propagation probability $p(e)$ associated with it. A propagation probability of 0.5 on directed edge $(A,B)$ denotes that if node $A$ is influenced (i.e., has information about HIV prevention), it influences node $B$ (i.e., gives information to node $B$) with a 0.5 probability in each subsequent time step (our full influence model is defined below). This graph $G$ with all relevant $p(e)$ and $u(e)$ values represents an uncertain network and serves as an input to the DIME problem. Figure \ref{fig:uncertainG} shows an uncertain network on 6 nodes (\textit{A} to \textit{F}) and 7 edges. The dashed and solid edges represent uncertain (edge numbers 1, 4, 5 and 7) and certain (edge numbers 2, 3 and 6) edges, respectively. Next, we explain the influence diffusion model that we use in HEALER.


\begin{wrapfigure}{r}{0.15\textwidth}
\centering
\includegraphics[width=1in]{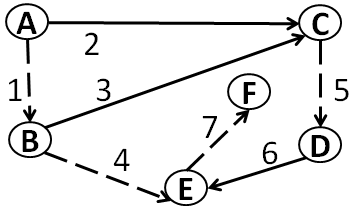}
\caption{\label{fig:uncertainG} Uncertain Network}
\end{wrapfigure}


\textbf{Influence Model} We use a variant of the independent cascade model \cite{yan2011influence}. In the standard independent cascade model, all nodes that get influenced at round $t$ get a \textbf{single} chance to influence their un-influenced neighbors at time $t+1$. If they fail to spread influence in this \textbf{single} chance, they don't spread influence to their neighbors in future rounds. Our model is different in that we assume that nodes get \textbf{multiple} chances to influence their un-influenced neighbors. If they succeed in influencing a neighbor at a given time step $t'$, they stop influencing that neighbor for all future time steps. Otherwise, if they fail in step $t'$, they try to influence again in the next round. This variant of independent cascade has been shown to empirically provide a better approximation to real influence spread than the standard independent cascade model \cite{cointet2007, yan2011influence}. Further, we assume that nodes that get influenced at a certain time step remain influenced for all future time steps. We now explain how HEALER generates an \textit{uncertain social network}.



\subsection{HEALER's Facebook application}
HEALER generates an \textit{uncertain network} by (i) using its Facebook application to generate a network with no uncertain edges; (ii) using well known link prediction techniques such as KronEM \cite{kim2011network} to infer existence probabilities $u(e)$ for all possible \textit{missing} edges that are not present in the network; (iii) deciding on a threshold probability $\tau$ (in consultation with homeless shelter officials), so that we \textit{only} add a \textit{missing} edge as an uncertain edge if its inferred existence probability $u(e) > \tau$; and (iv) asking homeless shelter officials to provide $p(e)$ estimates for network edges. 

\textbf{Choosing $\tau$: } Rice et. al \cite{rice2012position} show that real-world homeless youth networks are \textit{relatively sparse}. Thus, shelter officials choose the threshold probability value $\tau$ such that the number of uncertain edges that get added because of $\tau$ does not make our input uncertain network \textit{overly dense}. Next, we introduce the DIME problem.


\section{DIME Problem}\label{sec:DIME}
We now provide some background information that helps us define a precise problem statement for DIME. After that, we will show some hardness results about this problem statement. 

Given the \textit{uncertain network} as input, HEALER runs for \textit{$T$ rounds} (corresponding to the number of interventions organized by the homeless shelter). 
In each round, HEALER chooses \textit{$K$ nodes} (youth) as intervention participants. These participants are assumed to be influenced post-intervention with certainty. Upon influencing the chosen nodes, HEALER `\textit{observes}' the true state of the \textit{uncertain edges} (friendships) out-going from the selected nodes. This translates to asking intervention participants about their 1-hop social circles, which is within the homeless shelter's capabilities \cite{rice2012position}. 

After each round, influence spreads in the network according to our influence model for \textit{$L$ time steps}, before we begin the next round. This $L$ represents the time duration in between two successive intervention camps. \textit{In between rounds, HEALER does not observe the nodes that get influenced during $L$ time steps}. HEALER only knows that explicitly chosen nodes (our intervention participants in all past rounds) are influenced. Informally then, given an uncertain network $G_0=(V, E)$ and integers $T$, $K$, and $L$ (as defined above), HEALER finds an online policy for choosing \textit{exactly} $K$ nodes for $T$ successive rounds (interventions) which maximizes influence spread in the network at the end of $T$ rounds. 

We now provide notation for defining HEALER's policy formally. Let $\bm{\mathcal{A}}=\{A \subset V \mbox{ s.t. } |A|=K \}$ denote the set of $K$ sized subsets of $V$, which represents the set of possible choices that HEALER can make at every time step $t \in [1,T]$. Let $A_i \in \bm{\mathcal{A}} \mbox{ } \forall i \in [1,T]$ denote HEALER's choice in the $i^{th}$ time step. Upon making choice $A_i$, HEALER `\textit{observes}' uncertain edges adjacent to nodes in $A_i$, which updates its understanding of the network. Let $G_i \mbox{ } \forall \mbox{ } i \in [1,T]$ denote the uncertain network resulting from $G_{i-1}$ with \textit{observed} (additional edge) information from $A_i$. Formally, we define a history $H_i \mbox{ } \forall \mbox{ } i \in [1,T]$ of length $i$ as a tuple of past choices and observations $H_i = \tuple{G_0, A_1, G_1, A_2,..,A_{i-1},G_i}$. Denote by $\bm{\mathcal{H}_i} = \{ H_k \mbox{ s.t. } k \leqslant i \}$ the set of all possible histories of length less than or equal to $i$. Finally, we define an $i$-step policy $\bm{\Pi_i} \colon \bm{\mathcal{H}_i} \to \bm{\mathcal{A}}$ as a function that takes in histories of length less than or equal to $i$ and outputs a $K$ node choice for the current time step. We now provide an explicit problem statement for DIME.

\begin{problem}{\textbf{DIME Problem}}
Given as input an uncertain network $G_0=(V, E)$ and integers $T$, $K$, and $L$ (as defined above). Denote by $\mathcal{R}(H_T, A_T)$ the \textit{expected total number of influenced nodes at the end of round $T$}, given the $T$-length history of previous observations and actions $H_T$, along with $A_T$, the action chosen at time $T$. Let $\E_{H_T,A_T \sim \Pi_T} [\mathcal{R}(H_T,A_T)]$ denote the expectation over the random variables $H_T=\tuple{G_0, A_1,..,A_{T-1},G_T}$ and $A_T$, where $A_i$ are chosen according to $\Pi_T(H_i)  \mbox{ } \forall \mbox{ } i \in [1,T]$, and $G_i$ are drawn according to the distribution over uncertain edges of $G_{i-1}$ that are revealed by $A_i$. The objective of DIME is to find an optimal $T$-step policy $\bm{\Pi_T^*} = \argmax_{\Pi_T} \E_{H_T,A_T \sim \Pi_T}[\mathcal{R}(H_T, A_T)]$. 
\end{problem} 

Next, we show hardness results about the DIME problem. First, we analyze the value of having complete information in DIME. Then, we characterize the computational hardness of DIME.

\textbf{The Value of Information. }We characterize the impact of insufficient information (about the uncertain edges) on the achieved solution value. We show that no algorithm for DIME is able to provide a good approximation to the \textit{full-information solution value} (i.e., the best solution achieved w.r.t. the underlying ground-truth network), even with infinite computational power. 

\begin{figure}[htb]
\center{\includegraphics[scale=.33]
{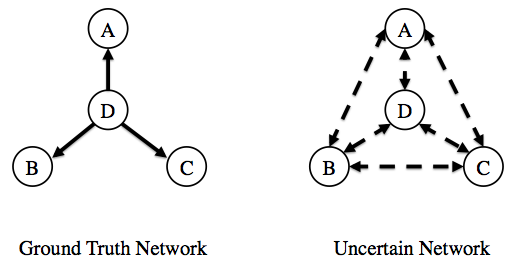}}
\caption{\label{fig:Figure1} Counter-example for Theorem \ref{Th:4}}
\end{figure}

\begin{theorem}\label{Th:4}
Given an uncertain network with $n$ nodes, for any $\epsilon > 0$, there is no algorithm for the DIME problem which can guarantee a $n^{-1+\epsilon}$ approximation to $OPT_{full}$, the \textit{full-information solution value}. 
\end{theorem}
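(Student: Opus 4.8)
The plan is to prove an information-theoretic impossibility result, not a computational one: the obstruction is that the algorithm never learns the true network, so infinite computation does not help. I would exhibit a single ``hard'' uncertain network $\hat G$ together with a family of candidate ground-truth networks that all induce \emph{exactly} this $\hat G$, so that any DIME algorithm must behave identically across the whole family while $OPT_{full}$ stays large on each member. The adversary then selects the family member on which the algorithm does worst.

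Concretely, I would take parameters $T=1$, $K=1$, $L=1$ (a legitimate DIME instance), split the $n$ nodes into $m=\lceil n/2\rceil$ ``hubs'' $v_1,\dots,v_m$ and $\ell=n-m$ ``leaves'' $p_1,\dots,p_\ell$, and for each $i$ define a ground-truth $G^{(i)}$ in which $v_i$ has propagation-probability-$1$ edges to every leaf and no other edges exist. Crucially, in $\hat G$ \emph{all} hub-to-leaf edges are declared uncertain (each with the same existence probability), so $\hat G$ is the same regardless of which $G^{(i)}$ is the truth, and the leaves are sinks so there is no ``downstream'' way to reach them. Since with $T=1$ the single round ends before any observation of outgoing uncertain edges can be exploited, the algorithm's (possibly randomized) choice of a single node is a fixed function of $\hat G$ alone, hence identical across the family.

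The two estimates are then routine. On $G^{(i)}$ the optimal full-information choice is $v_i$, which influences the hub and all leaves, so $OPT_{full}(G^{(i)})=1+\ell$. For the algorithm, picking a leaf yields influence $1$ in every $G^{(i)}$, while picking hub $v_j$ yields $1+\ell$ only when $j=i$ and $1$ otherwise; averaging any node distribution over $i=1,\dots,m$ gives expected influence at most $1+\ell/m$, because the chosen node coincides with $v_i$ for at most a $1/m$ fraction of the family. Consequently some index $i^*$ satisfies $\mathrm{ALG}(G^{(i^*)})\le 1+\ell/m$, giving a ratio at most $(1+\ell/m)/(1+\ell)\le 4/n$ for $m,\ell=\Theta(n)$; since $4/n<n^{-1+\epsilon}$ for all large $n$, no algorithm can guarantee the claimed factor. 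I would finish with the averaging (Yao-style) remark that this holds for \emph{every} algorithm, since the bound $\le 1+\ell/m$ is on the family average of the algorithm's own value.

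The step I expect to be the crux is the indistinguishability design, i.e.\ guaranteeing that the $\Theta(n)$ units of hidden influence are reachable \emph{only} through a hub that $\hat G$ cannot single out, with no visible or bypass node capturing them. This is why every hub-leaf edge must be uncertain (nothing in $\hat G$ distinguishes the special hub) and why the leaves must be sinks and $T=1$ (otherwise a probing round, or a directly selectable high-out-degree node, would recover the value). I would also note that the worst-case realization forces correlation among the uncertain edges; under a product distribution the same family members remain positive-probability ground truths, so the worst-case argument is unaffected.
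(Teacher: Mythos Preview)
Your proposal is correct and shares the paper's core idea---hide a star behind a symmetric uncertain network so the algorithm cannot locate the high-value center---but the execution differs enough to be worth a comparison. The paper uses a single instance: the complete uncertain digraph on $n$ nodes (all $2{n\choose 2}$ edges uncertain with $u(e)=0.5$, $p(e)=1$) together with one fixed star on those $n$ nodes as the ground truth; it then argues informally that by symmetry ``any algorithm selects a node uniformly at random,'' yielding expected influence $2-1/n$ against $OPT_{full}=n$. Your bipartite hubs/leaves construction instead builds an explicit \emph{family} $\{G^{(i)}\}_{i\le m}$ of ground truths and applies a Yao-style averaging argument to extract some $i^*$ with ratio at most $4/n$. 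Your route is slightly longer but cleaner on one point: the paper's symmetry step glosses over deterministic algorithms (which pick a fixed node, not a uniformly random one), whereas your averaging bound $\mathrm{ALG}\le 1+\ell/m$ handles deterministic and randomized algorithms uniformly without an implicit ``may as well randomize'' claim. Conversely, the paper's version is shorter and needs no family or minimax step, at the cost of that informal appeal to symmetry. Both land on an $O(1/n)$ ratio, which is all the theorem requires.
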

\begin{proof}
We prove this statement by providing a counter-example in the form of a specific (ground truth) network for which there can exist no algorithm which can guarantee a $n^{-1+\epsilon}$ approximation to $OPT_{full}$. Consider an input to the DIME problem, an \textit{uncertain network} with $n$ nodes with $2 * {n \choose 2}$ uncertain edges between the $n$ nodes, i.e., it's a completely connected uncertain network consisting of \textit{only} uncertain edges (an example with $n=3$ is shown in Figure \ref{fig:Figure1}). Let $p(e)=1$ and $u(e)=0.5$ on all edges in the \textit{uncertain network}, i.e., all edges have the same propagation and existence probability. Let $K=1$, $L=1$ and $T=1$, i.e., we just select a single node in one shot (in a single round). 

Further, consider a star graph (as the ground truth network) with $n$ nodes such that propagation probability $p(e) = 1$ on all edges of the star graph (shown in Figure 1). Now, any algorithm for the DIME problem would select a single node in the \textit{uncertain network} uniformly at random with equal probability of $1/n$ (as information about all nodes is symmetrical). In expectation, the algorithm will achieve an expected reward  $\{1/n \times (n)\} + \{1/n \times (1) + ... + 1/n \times (1)\} = 1/n \times(n) + (n-1)/n \times 1 = 2 - 1/n$. However, given the ground truth network, we get $OPT_{full}=n$, because we always select the star node. As $n$ goes to infinity, we can at best achieve a $n^{-1}$ approximation to $OPT_{full}$. Thus, no algorithm can achieve a $n^{-1+\epsilon}$ approximation to $OPT_{full}$ for any $\epsilon > 0$.   
\end{proof}

\textbf{Computational Hardness. }We now analyze the hardness of computation in the DIME problem in the next two theorems.

\begin{theorem}\label{Th:1}
The DIME problem is NP-Hard.
\end{theorem}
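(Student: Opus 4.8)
The plan is to establish NP-hardness by exhibiting a polynomial-time reduction from a known NP-hard problem into a restricted special case of DIME. The natural candidate is the Maximum Coverage problem (equivalently, one may reduce from Set Cover), which underlies the classical NP-hardness of single-shot influence maximization under the independent cascade model \cite{kempe2003maximizing}. The key observation is that DIME strictly generalizes single-shot influence maximization: by forcing the horizon to be trivial and removing all network uncertainty, the sequential, partially-observable structure collapses, and the problem degenerates to selecting one seed set of size $K$ that maximizes deterministic spread.

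Concretely, given a Maximum Coverage instance --- a ground set $U = \{u_1, \ldots, u_r\}$, a family of subsets $S_1, \ldots, S_m \subseteq U$, and a budget $K$ --- I would construct a DIME instance as follows. Take $G_0$ to be the bipartite directed graph with one node per subset $S_j$ and one node per element $u_i$, and a directed edge $(S_j, u_i)$ whenever $u_i \in S_j$. Declare every edge certain (so $E_u = \emptyset$ and there is no uncertainty to observe), and set $p(e) = 1$ on all edges. Finally set $T = 1$, $L = 1$, and keep the same budget $K$. Because there is a single round, the only history is $\tuple{G_0}$, so any policy $\bm{\Pi_1}$ simply commits to one action $A_1 \in \bm{\mathcal{A}}$, i.e.\ a $K$-subset of nodes; and because there is no uncertainty, the expected reward $\mathcal{R}$ is a deterministic count.

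The correspondence is then immediate: choosing the $K$ subset-nodes $\{S_{j_1}, \ldots, S_{j_K}\}$ as $A_1$ influences, in one propagation step (since $p=1$), exactly those element-nodes lying in $\bigcup_t S_{j_t}$, for a total reward of $K + |\bigcup_t S_{j_t}|$. One checks that an optimal solution never benefits from seeding element-nodes, since they are sinks, so without loss of generality $A_1$ consists of subset-nodes. Hence maximizing the DIME objective is exactly maximizing the number of covered elements, and an optimal DIME policy yields an optimal maximum-coverage solution. Since the reduction is polynomial and Maximum Coverage is NP-hard, DIME is NP-hard.

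I expect the main subtlety to lie not in the combinatorics of the reduction but in verifying that the variant influence model and the POMDP/policy formalism genuinely degenerate to the single-shot deterministic setting. Specifically, I would argue that (i) with $p(e)=1$ the multiple-chance independent cascade coincides with the single-chance model, so one propagation step suffices and $L=1$ is enough; and (ii) with $T=1$ and $E_u = \emptyset$ the ``observe uncertain edges'' step is vacuous, so the optimal-policy objective $\argmax_{\Pi_T} \E_{H_T,A_T \sim \Pi_T}[\mathcal{R}(H_T, A_T)]$ reduces to a plain $\argmax$ over $K$-subsets of $V$. Making these two degeneracies rigorous, together with the without-loss-of-generality restriction to seeding subset-nodes, is the only place that requires genuine care.
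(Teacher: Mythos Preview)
Your proposal is correct and takes essentially the same approach as the paper: both specialize DIME to the case $E_u = \emptyset$, $T = 1$, $L = 1$, $p(e) = 1$, observe that this collapses to single-shot deterministic influence maximization, and invoke the NP-hardness established by Kempe et al.\ \cite{kempe2003maximizing}. The only difference is that the paper gives a two-line argument that cites \cite{kempe2003maximizing} as a black box, whereas you unroll the underlying Maximum Coverage reduction explicitly and spell out the degeneracy checks; this is more self-contained but not a genuinely different route.
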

\begin{proof}
Consider the case where $E_u = \Phi$, $L=1$, $T=1$ and $p(e) = 1\mbox{ } \forall \mbox{ } e \in E$. This degenerates to the standard influence maximization problem which is shown to be NP-Hard \cite{kempe2003maximizing}. Thus, the DIME problem is also NP-Hard.   
\end{proof}

Some NP-Hard problems exhibit nice properties that enable approximation guarantees for them. Golovin et. al. \cite{golovin2011adaptive} introduced adaptive submodularity, an analog of submodularity for adaptive settings. Presence of adaptive submodularity ensures that a simply greedy algorithm provides a $(1-1/e)$ approximation guarantee w.r.t. the optimal solution defined on the \textit{uncertain network}. However, as we show next, while DIME can be cast into the adaptive stochastic optimization framework of \cite{golovin2011adaptive}, our influence function is not adaptive submodular, because of which their Greedy algorithm does not have a $(1-1/e)$ approximation guarantee. 

\begin{figure}
		\begin{minipage}[t]{0.8\columnwidth}%
			\includegraphics[bb=120bp 200bp 770bp 400bp,clip,scale=0.5]{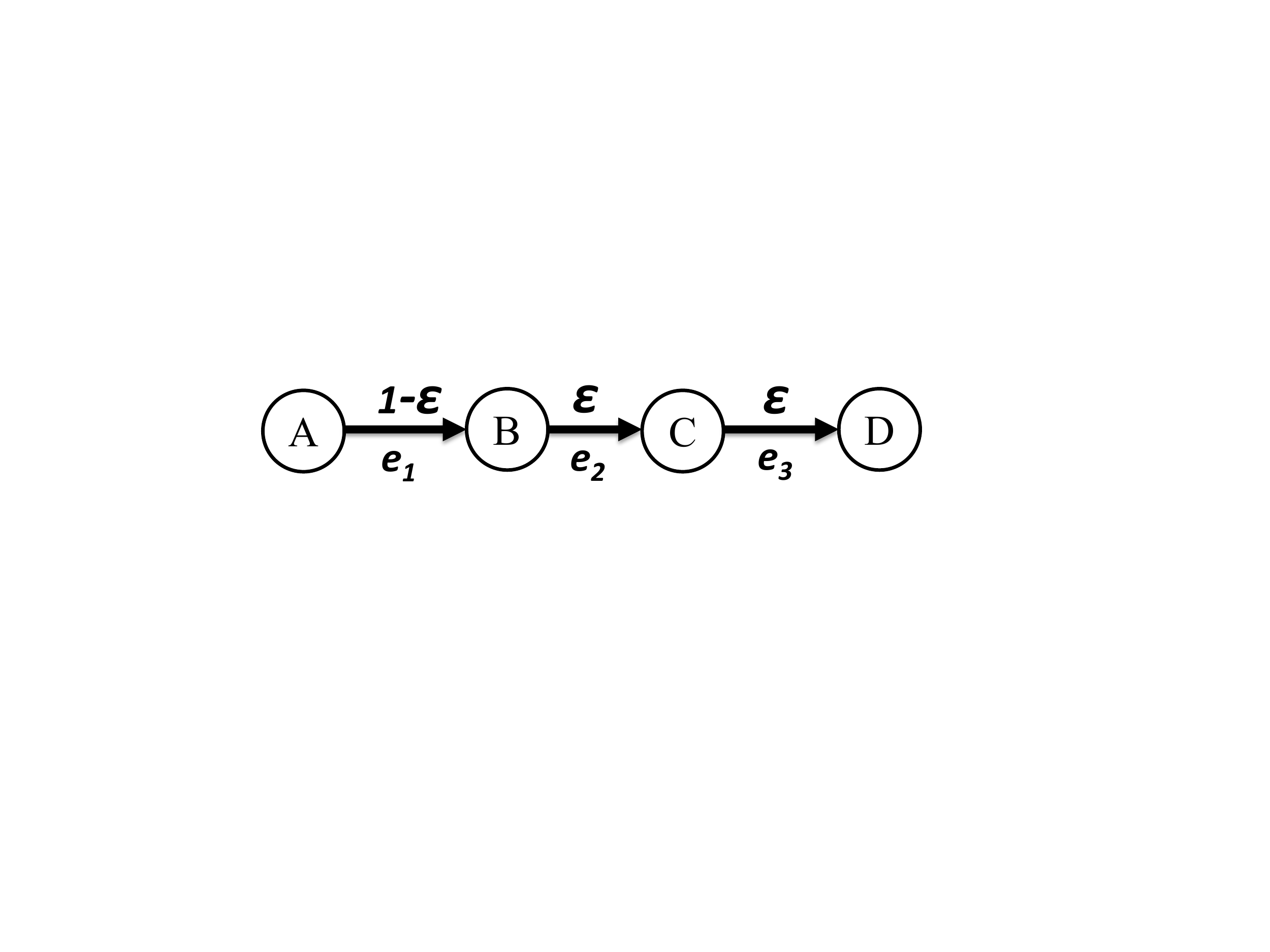}%
		\end{minipage}
		\caption{\label{fig:failure}Failure of Adaptive Submodularity}
	\end{figure}

\begin{theorem}\label{Th:2}
The influence function of DIME is not adaptive submodular.
\end{theorem}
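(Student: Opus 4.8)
The plan is to prove the statement by exhibiting an explicit counterexample (the network sketched in Figure \ref{fig:failure}), since adaptive submodularity is a universally quantified diminishing-returns condition and a single violating instance defeats it. First I would recall the instantiation of Golovin and Krause's framework for DIME: the ``items'' are the nodes, selecting a node reveals the realized states (present/absent) of its outgoing uncertain edges, a partial realization $\psi$ records the nodes selected so far together with these observed edge-states, and the objective $f(A,\phi)$ is the number of nodes influenced when $A$ is seeded in the ground-truth realization $\phi$. Writing $\Delta(e\mid\psi)=\E_{\phi\sim\psi}[f(\mathrm{dom}(\psi)\cup\{e\},\phi)-f(\mathrm{dom}(\psi),\phi)]$ for the conditional expected marginal of adding node $e$, adaptive submodularity demands $\Delta(e\mid\psi)\ge\Delta(e\mid\psi')$ for every subrealization $\psi\subseteq\psi'$ and every $e\notin\mathrm{dom}(\psi')$. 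My goal is therefore to build one instance and one nested pair $\psi\subseteq\psi'$ for which $\Delta(e\mid\psi)<\Delta(e\mid\psi')$, i.e.\ observing \emph{more} strictly \emph{increases} a node's marginal value.

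The gadget I would use is a ``pre-coverage'' construction. Take a large target set $S=\{w_1,\dots,w_m\}$, a source node $s$ with a certain edge $s\to c$ into an intermediate hub $c$, uncertain edges $c\to w_i$ for every $i$ each present with probability $p\in(0,1)$, and a candidate node $e$ with certain edges $e\to w_i$ for every $i$; I set all propagation probabilities to $1$ and take the horizon long enough that cascades complete, so that $f$ simply counts reachable nodes. The two realizations I compare are $\psi$, under which only $s$ has been selected (so $c$ is influenced but $c$'s outgoing edges remain \emph{unobserved}), and $\psi'\supseteq\psi$, under which $s$ and $c$ have both been selected and $c$'s outgoing edges are \emph{observed to be absent}. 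Note $\psi\subseteq\psi'$ is a legitimate subrealization pair, occurring with positive probability $(1-p)^m$, and $e\notin\mathrm{dom}(\psi')$.

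Computing the two marginals is then routine. Under $\psi$ the cascade from $s$ reaches each $w_i$ through the unobserved edge $c\to w_i$ with probability $p$, so adding $e$ (which covers all of $S$ deterministically) yields $\Delta(e\mid\psi)=1+m(1-p)$, the expected number of still-uncovered targets plus $e$ itself. Under $\psi'$ we have certified that $c$ reaches nothing, so $S$ is entirely uncovered and adding $e$ yields $\Delta(e\mid\psi')=m+1$. Since $m+1>1+m(1-p)$ for every $p>0$, the diminishing-returns inequality is violated and DIME's influence function is not adaptive submodular.

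The conceptual crux --- and the step I expect to require the most care --- is arranging that acquiring the extra observation does not itself wipe out the gain through the usual overlap/submodularity effect. Selecting the extra node $c$ adds it to the seed set, which ordinarily only \emph{lowers} a later node's marginal value; the construction sidesteps this by having the newly revealed edges turn out \emph{absent}, so the additional seed contributes no overlapping coverage while its observation certifies that $e$'s target region is still available. I would therefore emphasize in the write-up that the violation hinges on conditioning on a \emph{negative} observation, and would double-check that $\psi\subseteq\psi'$ genuinely holds as subrealizations (agreeing on $s$'s revealed edges), so that the single exhibited pair legitimately refutes the universally quantified condition of Theorem \ref{Th:2}.
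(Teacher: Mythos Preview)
Your counterexample is correct and establishes the theorem, but it is \emph{not} the construction the paper uses, so your reference to Figure~\ref{fig:failure} is misleading: that figure depicts a four-node directed path $a\to b\to c\to d$, not your hub-and-spoke gadget.

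Substantively the two proofs exploit different features of DIME. The paper works on the path $a\to b\to c\to d$ with $L=2$, sets $u(e_1)=1-\epsilon$, $u(e_2)=u(e_3)=\epsilon$, and compares the marginal of $b$ over $\{a\}$ (with $e_1$ observed present) against its marginal over $\{a,c\}$ (with $e_1,e_3$ observed present); the bounded propagation horizon $L$ is what drives their violation, since seeding $b$ pushes the reachable frontier one hop farther than seeding $a$ alone can. Your construction instead holds the horizon large, uses only the observation model, and produces the violation by conditioning on a \emph{negative} outcome (all $c\to w_i$ absent), which certifies that $e$'s target set is still uncovered. Your argument is arguably cleaner and more robust: it isolates the failure to the edge-observation feedback rather than to the finite-$L$ dynamics, it works for any $L\ge 2$, and the arithmetic ($1+m$ versus $1+m(1-p)$) is immediate. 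The paper's approach, on the other hand, uses a smaller fixed-size instance and highlights that the bounded-horizon aspect of DIME is itself enough to break adaptive submodularity even under favorable observations. Either route suffices; just be sure to drop the cross-reference to Figure~\ref{fig:failure} or supply your own picture of the $s\to c\to\{w_i\}\leftarrow e$ gadget.
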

\begin{proof}
	The definition of adaptive submodularity requires that the expected marginal increase of influence by picking an additional node $v$ is more when we have less observation. Here the expectation is taken over the random states that are consistent with current observation. We show that this is not the case in DIME problem. Consider a path with $4$ nodes $a,b,c,d$ and three \emph{directed} edges $e_1 = (a,b)$ and $e_2 = (b,c)$ and $e_3 = (c,d)$ (see Figure \ref{fig:failure}). Let $p(e_1) = p(e_2) = p(e_3)=1$, i.e., propagation probability is $1$; $L=2$, i.e., influence stops after two round; and $u(e_1) =1-\epsilon$ $u(e_2) = u(e_3) = \epsilon$ for some small enough $\epsilon$ to be set. That is the only uncertainty comes from incomplete knowledge of the existence of edges. 
	
	Let $\Psi_1 = \{e_1 \text{ exists} \}$ and $\Psi_2 = \{e_1,e_3 \text{ exists} \}$. Then $\mathbb{E}_{\Phi}\left[f(a,b,c)|\Phi \sim \Psi_2 \right] = 4$ since all nodes will be influenced. $\mathbb{E}_{\Phi}\left[f(a,c)|\Phi \sim \Psi_2 \right] = 4-\epsilon$ since the only uncertain node is $b$ which will be influenced with probability $1-\epsilon$. Therefore,
	\begin{equation}\label{eq:withb}
		\mathbb{E}_{\Phi}\left[f(a,b,c)|\Phi \sim \Psi_2 \right] - \mathbb{E}_{\Phi}\left[f(a,c)|\Phi \sim \Psi_2 \right] = \epsilon.
	\end{equation}
		Now 
	 $\mathbb{E}_{\Phi}\left[f(a,b)|\Phi \sim \Psi_1 \right] = 2 + \epsilon + \epsilon^2$ since $a,b$ will be surely influenced, $c$ and $d$ will be influenced with probability $\epsilon$ and $\epsilon^2$ respectively. On the other hand, $\mathbb{E}_{\Phi}\left[f(a)|\Phi \sim \Psi_1 \right] = 2 + \epsilon $ since $b$ will be surely influenced (since $e_1$ exists) and $c$ will be influenced with probability $\epsilon$. Since $L=2$, $d$ cannot be influenced. As a result, 
	 	\begin{equation}\label{eq:nob}
	 	\mathbb{E}_{\Phi}\left[f(a,b)|\Phi \sim \Psi_2 \right] - \mathbb{E}_{\Phi}\left[f(a)|\Phi \sim \Psi_2 \right] = \epsilon^2.
	 	\end{equation}
	 	
	 	Combining Equation \eqref{eq:withb} and \eqref{eq:nob}, we know that DIME is not adaptive submodular.
\end{proof}

\section{HEAL: DIME PROBLEM SOLVER}\label{sec:pomdp}
The above theorems show that DIME is a hard problem as it is difficult to even obtain any reasonable approximations. We model DIME as a POMDP \cite{puterman2009markov} because of two reasons. First, POMDPs are a good fit for DIME as (i) we conduct several interventions sequentially, similar to sequential POMDP actions; and (ii) we have \textit{partial observability} (similar to POMDPs) due to uncertainties in network structure and influence status of nodes. Second, POMDP solvers have recently shown great promise in generating near-optimal policies efficiently \cite{silver2010monte}. We now 
explain how we map DIME onto a POMDP. 


\textbf{States. } A POMDP state in our problem is a pair of binary tuples $s = \tuple{W, F}$ where $W$ and $F$ are of lengths $|V|$ and $|E_U|$, respectively. Intuitively, $W$ denotes the influence status of network nodes, where $W_i = 1$ denotes that node $i$ is influenced and $W_i = 0$ otherwise. Moreover, $F$ denotes the existence of uncertain edges, where $F_i = 1$ denotes that the $i^{th}$ uncertain edge exists in reality, and $F_i = 0$ otherwise.



\textbf{Actions. } Every choice of a subset of $K$ nodes is a POMDP action. More formally, $A = \{ a \subset V s.t. |a| = K\}$. For example, in Figure \ref{fig:uncertainG}, one possible action is $\{A,B\}$ (when $K=2$). 

\textbf{Observations. }Upon taking a POMDP action, we ``\textit{observe}" the ground reality of the uncertain edges outgoing from the nodes chosen in that action. Consider $\Theta(a) = \{ \mbox{e }|\mbox{ e = (x,y) \text{s.t.} x} \in a \mbox{ } \wedge\mbox{ e} \in E_u \}\mbox{ }\forall a \in A$, which represents the (ordered) set of uncertain edges that are observed when we take action $a$. Then, our POMDP observation upon taking action $a$ is defined as $o(a) = \{F_{e} | e \in \Theta(a)\}$, i.e., the F-values of the observed uncertain edges. For example, by taking action $\{B,C\}$ in Figure \ref{fig:uncertainG}, the values of $F_4$ and $F_5$ (i.e., the F-values of uncertain edges in the 1-hop social circle of nodes $B$ and $C$) would be observed.

\textbf{Rewards. } The reward $R(s,a,s')$ of taking action $a$ in state $s$ and reaching state $s'$ is the number of newly influenced nodes in $s'$. More formally, $R(s,a,s') = (\|s'\|-\|s\|)$, where $\|s'\|$ is the number of influenced nodes in $s'$. 

\textbf{Initial Belief State. } The initial belief state is a distribution $\beta_0$ over all states $s \in S$. The support of $\beta_0$ consists of all states $s = \tuple{W, F}$ s.t. $W_i=0 \mbox{ } \forall \mbox{ } i \in [1,|V|]$, i.e., all states in which all network nodes are un-influenced (as we assume that all nodes are un-influenced to begin with). Inside its support, each $F_i$ is distributed independently according to $P(F_i=1)= u(e)$.

\textbf{Transition And Observation Probabilities. } Computation of exact transition probabilities $T(s'|s,a)$ requires considering all possible paths in a graph through which influence could spread, which is $\mathcal{O}(N!)$ ($N$ is number of nodes in the network) in the worst case. Moreover, for large social networks, the size of the transition and observation probability matrix is prohibitively large (due to exponential sizes of state and action space).

Therefore, instead of storing huge transition/observation matrices in memory, we follow the paradigm of large-scale online POMDP solvers \cite{silver2010monte,eck2015ask,dibangoye2009topological} by using a generative model $\Lambda(s, a) \sim (s', o, r)$ of the transition and observation probabilities. This generative model allows us to generate on-the-fly samples from the exact distributions $T(s'|s,a)$ and $\Omega(o|a,s')$ at very low computational costs. Given an initial state $s$ and an action $a$ to be taken, our generative model $\Lambda$ simulates the random process of influence spread to generate a random new state $s'$, an observation $o$ and the obtained reward $r$. Simulation of the random process of influence spread is done by ``\textit{playing}" out propagation probabilities (i.e., flipping weighted coins with probability $p(e)$) according to our influence model to generate sample $s'$. The observation sample $o$ is then determined from $s'$ and $a$. Finally, the reward sample $r = (\|s'\|-\|s\|)$ (as defined above). This simple design of the generative model allows significant scale and speed up (as seen in previous work \cite{silver2010monte} and also in our experimental results section).

We solve this POMDP using a novel algorithm (described in Section \ref{sec:alg}) to find the optimal policy $\bm{\Pi_T^*}$ for the DIME problem.



\subsection{HEALER's DIME Solver}\label{sec:alg}
Initial experiments with the POMCP solver \cite{silver2010monte} showed that it ran out of memory on 30 node graphs. Similarly, PSINET-W \cite{yadav2015preventing} was simply unable to scale up to real world demands (as shown in our experiments). Hence, we propose HEAL, a new heuristic based online POMDP planner (for solving the DIME problem) which scales up to our collaborating shelter's real world demands.

\subsubsection{HEAL}
HEAL solves the \textit{original POMDP} using a novel \textit{hierarchical ensembling heuristic}: it creates ensembles of imperfect (and smaller) POMDPs at \textit{two} different layers, in a hierarchical manner (see Figure \ref{fig:Flow}). HEAL's \textit{top layer} creates an ensemble of smaller sized \textit{intermediate POMDPs} by subdividing the original \textit{uncertain network} into several smaller sized \textit{partitioned networks} by using graph partitioning techniques \cite{lasalle2013multi}. Each of these partitioned networks is then mapped onto a POMDP, and these \textit{intermediate POMDPs} form our \textit{top layer} ensemble of POMDP solvers.

\begin{figure}[t]
\center{\includegraphics[scale=.35]
{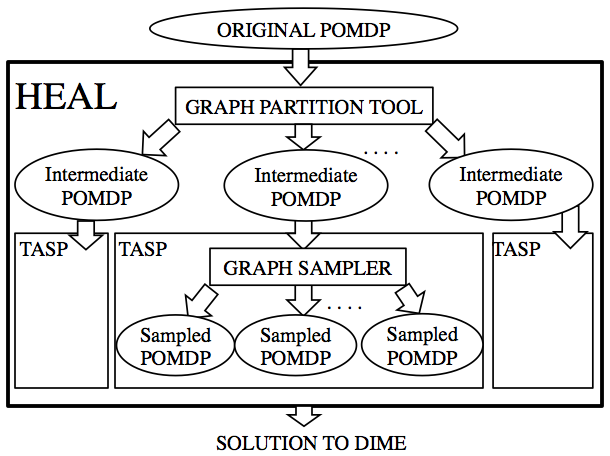}}
\caption{\label{fig:Flow} Hierarchical decomposition in HEAL}
\vspace{-2mm}
\end{figure} 

In the bottom layer, each \textit{intermediate POMDP} is solved using TASP (\textbf{T}ree \textbf{A}ggregation for \textbf{S}equential \textbf{P}lanning), our novel POMDP planner, which subdivides the POMDP into another ensemble of smaller sized \textit{sampled POMDPs}. Each member of this \textit{bottom layer} ensemble is created by randomly sampling uncertain edges of the partitioned network to get a sampled network having no uncertain edges, and this sampled network is then mapped onto a \textit{sampled POMDP}. Finally, the solutions of POMDPs in both the \textit{bottom} and \textit{top layer} ensembles are aggregated using novel techniques to get the solution for HEAL's original POMDP. 

HEAL uses several novel heuristics. First, it uses a novel two-layered \textit{hierarchical ensembling heuristic}. Second, it uses graph partitioning techniques to partition the uncertain network, which generates partitions that minimize the edges going across partitions (while ensuring that partitions have similar sizes). Since these partitions are ``almost" disconnected, we solve each partition separately. Third, it solves the \textit{intermediate POMDP} for each partition by creating smaller-sized \textit{sampled POMDPs} (via sampling uncertain edges), each of which is solved using a novel tree search algorithm, which avoids the exponential branching factor seen in PSINET \cite{yadav2015preventing}. Fourth, it uses novel aggregation techniques to combine solutions to these smaller POMDPs rather than simple plurality voting techniques seen in previous ensemble techniques \cite{yadav2015preventing}.

These heuristics enable scale up to real-world sizes (at the expense of sacrificing performance guarantees), as instead of solving one huge problem, we now solve several smaller problems. However, these heuristics perform very well in practice. Our simulations show that even on smaller settings, HEAL achieves a 100X speed up over PSINET, while providing a 70\% improvement in solution quality; and on larger problems, \textit{where PSINET is unable to run at all}, HEAL continues to provide high solution quality. Now, we elaborate on these heuristics by first explaining the TASP solver.

\begin{algorithm}[t!]
\label{alg:TASP}
\caption{TASP Solver}
\KwIn{Uncertain network $G$, Parameters $K$, $T$, $L$}
\KwOut{Best $K$ node action $\kappa$}
\begin{footnotesize}
Create ensemble of $\Delta$ different POMDPs; \\\label{flow:1}
\For {$\delta \in \Delta$} {
	$\alpha^{\delta} = Evaluate(\delta)$;\\\label{flow:2}
	}
	$r = Expectation(\alpha)$;\\\label{flow:3}
	$\kappa = \argmax_j r_j$;\\\label{flow:4}
	return $\kappa$;\\
\end{footnotesize}
\end{algorithm}
\setlength{\textfloatsep}{5pt}

\subsubsection{Bottom layer: TASP} \label{sec:TASP} We now explain TASP, our new POMDP solver that solves each \textit{intermediate POMDP} in HEAL's bottom layer. Given an \textit{intermediate POMDP} and the uncertain network it is defined on, as input, TASP goes through four steps (see Algorithm 1). 

First, Step \ref{flow:1} makes our \textit{intermediate POMDP} more tractable by creating an ensemble of smaller sized \textit{sampled POMDPs}. Each member of this ensemble is created by sampling uncertain edges of the input network to get an \textit{instantiated} network. Each uncertain edge in the input network is randomly kept with probability $u(e)$, or removed with probability $1-u(e)$, to get an \textit{instantiated} network with no uncertain edges. We repeat this sampling process to get $\Delta$ (a variable parameter) different \textit{instantiated} networks. These $\Delta$ different \textit{instantiated} networks are then mapped onto to $\Delta$ different POMDPs, which form our ensemble of \textit{sampled POMDPs}. Each \textit{sampled POMDP} shares the same action space (defined on the input partitioned network) as the different POMDPs only differ in the sampling of uncertain edges. Note that each member of our ensemble is a POMDP as even though sampling uncertain edges removes uncertainty in the $F$ portion of POMDP states, there is still partial observability in the $W$ portion of POMDP state. 

In Step \ref{flow:2} (called the Evaluate Step), for each instantiated network $\delta \in [1,\Delta]$, we generate an $\alpha^{\delta}$ list of rewards. The $i^{th}$ element of $\alpha^{\delta}$ gives the long term reward achieved by taking the $i^{th}$ action in \textit{instantiated} network $\delta$. In Step \ref{flow:3}, we find the expected reward $r_i$ of taking the $i^{th}$ action, by taking a reward expectation across the $\alpha^{\delta}$ lists (for each $\delta \in [1,\Delta]$) generated in the previous step. For e.g., if $\alpha^{\delta_1}_1 = 10$ and $\alpha^{\delta_2}_1 = 20$, i.e., the rewards of taking the $1^{st}$ action in instantiated networks $\delta_1$ and $\delta_2$ (which occurs with probabilities $P(\delta_1)$ and $P(\delta_2)$) are 10 and 20 respectively, then the expected reward $r_1 = P(\delta_1)\times 10 + P(\delta_2)\times 20$. Note that $P(\delta_1)$ and $P(\delta_2)$ are found by multiplying existence probabilities $u(e)$ (or $1-u(e)$) for uncertain edges that were kept (or removed) in $\delta_1$ and $\delta_2$. Finally, in Step \ref{flow:4}, the action $\kappa = \argmax_j r_j$ is returned by TASP. Next, we discuss the Evaluate Step (Step \ref{flow:2}).


\begin{algorithm}[t!]
\label{alg:Evaluate}
\caption{Evaluate Step}
\KwIn{Instantiated network $\delta$, Number of simulations $\bf{NSim}$}
\KwOut{Ranked Ordering of actions $\alpha^{\delta}$}
\begin{footnotesize}
$tree = Initialize\_K\_Level\_Tree()$;\\\label{evalflow:0}
$counter = 0$;\\\label{evalflow:1}
\While {$counter++ < \bf{NSim}$} {
	$K\_Node\_Act =  FindStep(tree)$;\\\label{evalflow:2}
	$LT\_Reward = SimulateStep(K\_Node\_Act)$;\\\label{evalflow:3}
	$UpdateStep(tree, LT\_Reward, K\_Node\_Act)$;\\\label{evalflow:4}
	}
	$\alpha^{\delta} = Get\_All\_Leaf\_Values(tree)$;\\\label{evalflow:7}
	return $\alpha^{\delta}$;
\end{footnotesize}
\end{algorithm}
\setlength{\textfloatsep}{5pt}

\textbf{Evaluate Step} Algorithm 2 generates the $\alpha^{\delta}$ list for a single instantiated network $\delta \in [1,\Delta]$. This algorithm works similarly for all instantiated networks. For each instantiated network, the Evaluate Step uses $\bf{NSim}$ (we use $2^{10}$) number of MC simulations to evaluate the long term reward achieved by taking actions in that network. Due to the combinatorial action space, the Evaluate Step uses a UCT \cite{kocsis2006bandit} driven approach to strategically choose the actions whose long term rewards should be calculated. UCT has been used to solve POMDPs in \cite{silver2010monte, yadav2015preventing}, but these algorithms suffer from a ${N \choose K}$  branching factor (where $K$ is number of nodes picked per round, $N$ is number of network nodes). We exploit the structure of our domain by creating a $K$-level UCT tree which has a branching factor of just $N$ (explained below). This $K$-level tree allows storing reward values for smaller sized node subsets as well (instead of just $K$ sized subsets), which helps in guiding the UCT search better.


Algorithm 2 takes an \textit{instantiated} network and creates the aforementioned $K$-level tree for that network. The first level of the tree has $N$ branches (one for each network node). For each branch $i$ in the first level, there are $N-1$ branches in the second tree level (one for each network node, except for node $i$, which was covered in the first level). Similarly, for every branch $j$ in the $m^{th}$ level ($m \in [2,K-1]$), there are $N-m$ branches in the $(m+1)^{th}$ level. Theoretically, this tree grows exponentially with $K$, however, the values of $K$ are usually small in practice (e.g., 4). 


In this $K$ level tree, each leaf node represents a particular POMDP action of $K$ network nodes. Similarly, every non-leaf tree node $v$ represents a subset $S_v$ of network nodes. Each tree node $v$ maintains a value $R_v$, which represents the average long term reward achieved by taking our POMDP's actions (of size $K$) which contain $S_v$ as a subset. For example, in Figure \ref{fig:uncertainG}, if $K=5$, and for tree node $v$, $S_v = \{A,B,C,D\}$, then $R_v$ represents the average long term reward achieved by taking POMDP actions $A_1 = \{A,B,C,D,E\}$ and $A_2 = \{A,B,C,D,F\}$, since both $A_1$ and $A_2$ contain $S_v = \{A,B,C,D\}$ as a subset. To begin with, all nodes $v$ in the tree are initialized with $R_v=0$ (Step \ref{evalflow:0}). By running $\bf{NSim}$ number of MC simulations, we generate good estimates of $R_v$ values for each tree node $v$.

Each node in this $K$-level tree runs a UCB1 \cite{kocsis2006bandit} implementation of a multi-armed bandit. The arms of the multi-armed bandit running at tree node $v$ correspond to the child branches of node $v$ in the $K$-level tree. Recall that each child branch corresponds to a network node. The overall goal of all the multi-armed bandits running in the tree is to construct a POMDP action of size $K$ (by traversing a path from the root to a leaf), whose reward is then calculated in that MC simulation (explained in Algorithm 3). Every MC simulation consists of three steps: Find Step (Step \ref{evalflow:2}), Simulate Step (Step \ref{evalflow:3}) and Update Step (Step \ref{evalflow:4}).

\begin{algorithm}[t!]
\label{alg:FindStep}
\caption{FindStep}
\KwIn{$K$ level deep tree - $tree$}
\KwOut{Action set of size $K$ nodes - $Act$}
\begin{footnotesize}
$Act = \Phi$;\\\label{findflow:1}
$tree\_node  = tree.Root$;\\\label{findflow:2}
\While {$is\_Leaf(tree\_node) == false$} {
	$MAB_{node} = Get\_UCB\_at\_Node(node)$;\\\label{findflow:3}
	$next\_node = Ask\_UCB(MAB_{node})$;\\\label{findflow:4}
	$Act = Act \cup next\_node$;\\
	$tree\_node = tree\_node.branch(next\_node)$;\\
	}
	return $Act$;\\
\end{footnotesize}
\end{algorithm}
\setlength{\textfloatsep}{5pt}

\textbf{Find Step}: The Find Step takes a $K$-level tree for an instantiated network and  \textit{finds} a $K$ node action, which is used in the Simulate Step. Algorithm 3 details the process of \textit{finding} this $K$ node action, which is found by traversing a path from the root node to a leaf node, one edge/arm at a time. Initially, we begin at the root node with an empty action set of size 0 (Steps \ref{findflow:1} and \ref{findflow:2}). For each node that we visit on our way from the root to a leaf, we use its multi-armed bandit (denoted by $MAB_{node}$ in Step \ref{findflow:3}) to choose which tree node do we visit next (or, which network node do we add to our action set). We get a $K$ node action upon reaching a leaf.

\textbf{Simulate Step}: The Simulate Step takes a $K$ node action from the Find Step, to \textit{evaluate} the long term reward of taking that action (called $Act$) in the instantiated network. Assuming that $T_0$ interventions remain (i.e., we have already conducted $T - T_0$ interventions), the Simulate Step first uses  action $Act$ in the generative model $\Lambda$ to generate a reward $r_0$. For all remaining $(T_0 - 1)$ interventions, Simulate Step uses a rollout policy to randomly select K node actions, which are then used in the generative model $\Lambda$ to generate future rewards $r_i \mbox{ }\forall \mbox{ } i \in [1, T_0 - 1]$ . Finally, the long term reward returned by Simulate Step is $r_0 + r_1 + ... + r_{T_0-1}$.


\textbf{Update Step}: The Update Step uses the long term reward returned by Simulate Step to update relevant $R_v$ values in the $K$-level tree. It updates the $R_v$ values of all nodes $v$ that were traversed in order to find the $K$ node action in the Find Step. First, we get the tree's leaf node corresponding to the $K$ node action that was returned by the Find Step. Then, we go and update $R_v$ values for all ancestors (including the root) of that leaf node. 

After running the Find, Simulate and Evaluate for $\bf{NSim}$ simulations, we return the $R_v$ values of all leaf nodes as the $\alpha^{\delta}$ list. Recall that we then find the expected reward $r_i$ of taking the $i^{th}$ action, by taking an expectation of rewards across the $\alpha^{\delta}$ lists. Finally, TASP returns the action $\kappa = \argmax_j r_j$.

\subsubsection{Top layer: Using Graph Partitioning}\label{sec:parti} We now explain HEAL's top layer, in which we use METIS \cite{lasalle2013multi}, a state-of-the-art graph partitioning technique, to subdivide our original uncertain network into different partitioned networks. These partitioned networks form the ensemble of \textit{intermediate POMDPs} (in Figure \ref{fig:Flow}) in HEAL. Then, TASP is invoked on each intermediate POMDP independently, and their solutions are aggregated to get the final DIME solution. We try two different partitioning/aggregation techniques, which leads to two variants of HEAL:

\textbf{K Partition Variant (HEAL): } Given the \textit{uncertain} network $G$ and the parameters $K$, $L$ and $T$ as input, we first partition the uncertain network into $K$ partitions. In each round from 1 to $T$, we invoke the bottom layer TASP algorithm to select 1 node from each of the $K$ clusters. These singly selected nodes from the $K$ clusters give us an action of $K$ nodes, which is given to shelter officials to execute. Based on the \textit{observation} (about uncertain edges) that officials get while executing the action, we update the partition networks (which are input to the \textit{intermediate POMDPs}) by either replacing the \textit{observed} uncertain edges with certain edges (if the edge was \textit{observed} to exist in reality) or removing the uncertain edge altogether (if the edge was \textit{observed} to \textit{not exist} in reality). The list of $K$ node actions that Algorithm 4 generates serves as an online policy for use by the homeless shelter. 

\textbf{T Partition Variant (HEAL-T): } Given the \textit{uncertain} network $G$ and the parameters $K$, $L$ and $T$ as input, we first partition the uncertain network into $T$ partitions and TASP picks $K$ nodes from the $i^{th}$ partition ($i \in [1,T]$) in the $i^{th}$ round. 

%

\section{Experimental Results}\label{sec:exp}
In this section, we analyze HEAL and HEAL-T's performance in a variety of settings. All our experiments are run on a 2.33 GHz 12-core Intel machine having 48 GB of RAM. All experiments are averaged over 100 runs. We use a metric of ``\textit{Indirect Influence}" throughout this section, which is number of nodes ``\textit{indirectly}" influenced by intervention participants. For example, on a 30 node network, by selecting 2 nodes each for 10 interventions (horizon), 20 nodes (a lower bound for any strategy) are influenced with certainty. However, the total number of influenced nodes might be 26 (say) and thus, the \textit{Indirect Influence} is $26-20 = 6$. In all experiments, the propagation and existence probability values on all network edges were uniformly set to $0.1$ and $0.6$, respectively. This was done based on findings in Kelly et. al.\cite{kelly1997randomised}. However, we relax these parameter settings later in the section. All experiments are statistically significant under bootstrap-t ($\alpha = 0.05$).

\textbf{Baselines: } We use two algorithms as baselines. We use PSINET-W as a benchmark as it is the most relevant previous algorithm, which was shown to outperform heuristics used in practice; however, we also need a point of comparison when PSINET-W does not scale. No previous algorithm in the influence maximization literature accounts for uncertain edges and uncertain network state in solving the problem of sequential selection of nodes; in-fact we show that even the standard Greedy algorithm \cite{kempe2003maximizing,golovin2011adaptive} has no approximation guarantees as our problem is not adaptive submodular. Thus, we modify Greedy by replacing our uncertain network with a certain network (in which each uncertain edge $e$ is replaced with a certain edge $e_0$ having propagation probability $p(e_0) = p(e) \times u(e)$), and then run the Greedy algorithm on this \textit{certain network}. We use the Greedy algorithm as a baseline as it is the best known algorithm known for influence maximization and has been analyzed in many previous papers \cite{cohen2014sketch,Borgs14,tang2014influence,kempe2003maximizing,leskovec2007cost,golovin2011adaptive}. 


\textbf{Datasets: } We use \textit{four real world social networks} of homeless youth, provided to us by our collaborators. All four networks are friendship based social networks of homeless youth living in different areas of a big city in USA (name withheld for anonymity). The first and second networks are of homeless youth living in two large areas (denoted by VE and HD to preserve anonymity), respectively. These two networks (each having $\sim$150-170 nodes, 400-450 edges) were created through surveys and interviews of homeless youth (conducted by our collaborators) living in these areas. The third and fourth networks are relatively small-sized online social networks of these youth created from their Facebook (34 nodes, 120 edges) and MySpace (107 nodes, 803 edges) contact lists, respectively. When HEALER is deployed, we anticipate even larger networks, (e.g., 250-300 nodes) than the ones we have in hand and we also show run-time results on artificial networks of these sizes.

\begin{figure}[t]
\vspace{-2pt}
\subfloat[\small Solution Quality]{\includegraphics[height=0.9in,width=0.5\columnwidth]{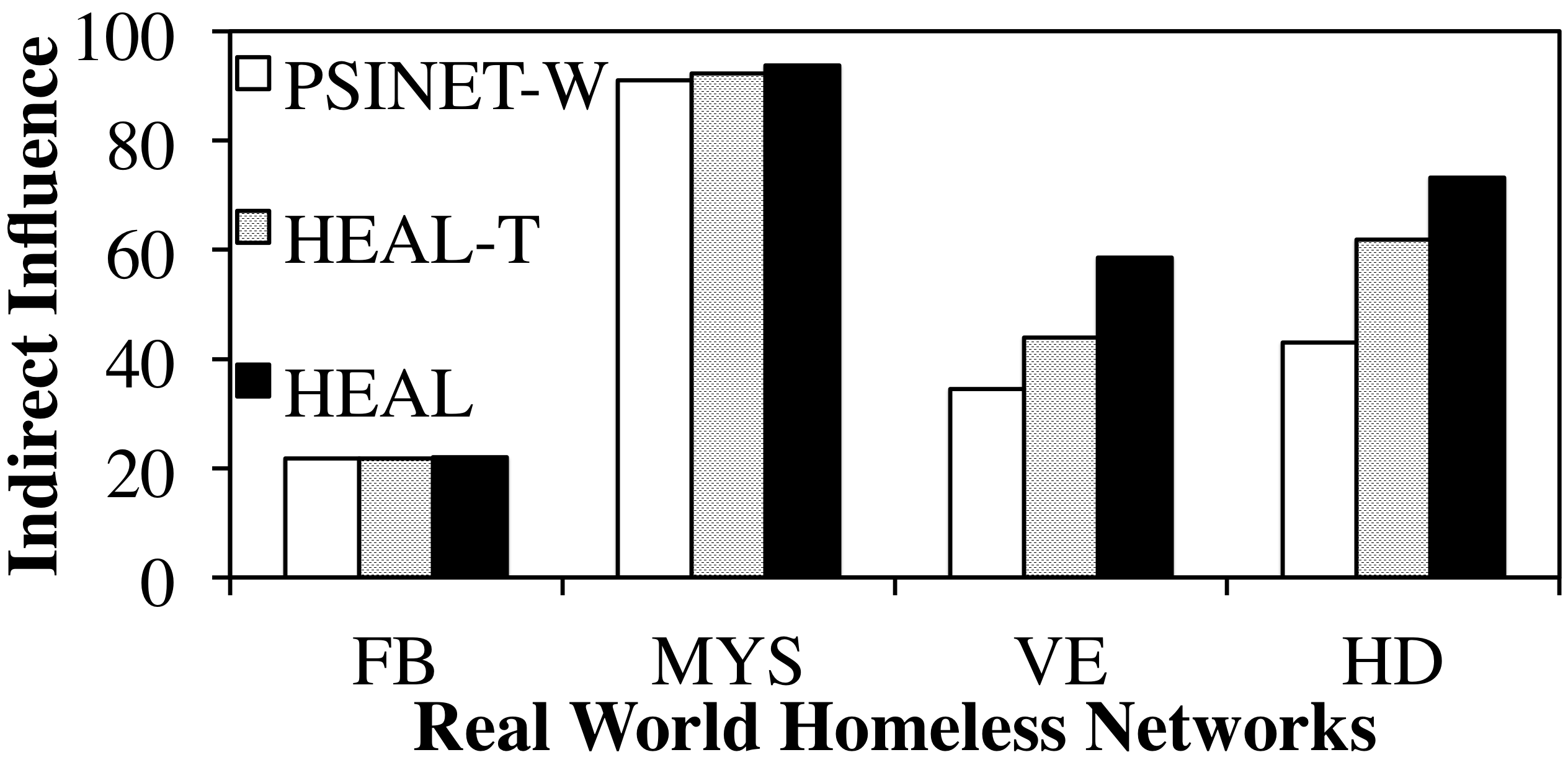}\label{fig:SolQual}}
\subfloat[\small Runtime]{\includegraphics[height=0.9in,width=0.5\columnwidth]{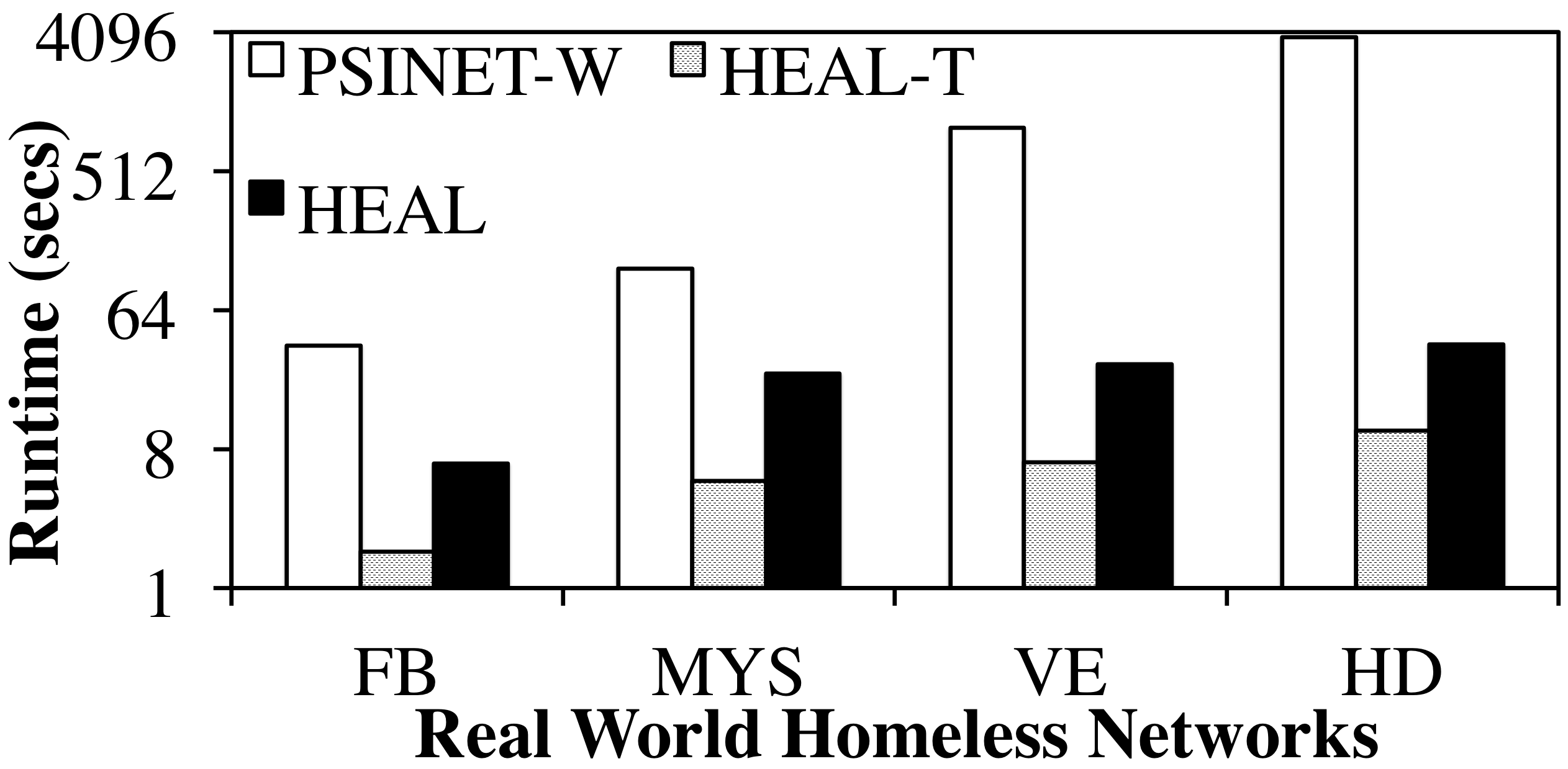}\label{fig:Runtime}}
\caption{Solution Quality and Runtime on Real World Networks}
\end{figure}  

\textbf{Solution Quality/Runtime Comparison. }We compare \textit{Indirect Influence} and run-times of HEAL, HEAL-T and PSINET-W on all four real-world networks. We set $T=5$ and $K=2$ (since PSINET-W fails to scale up beyond $K=2$ as shown later). Figure \ref{fig:SolQual} shows the \textit{Indirect Influence} of the different algorithms on the four networks. The X-axis shows the four networks and the Y-axis shows the \textit{Indirect Influence} achieved by the different algorithms. This figure shows that (i) HEAL outperforms all other algorithms on every network; (ii) \textit{it achieves $\sim$70\% improvement over PSINET-W} in VE and HD networks; (iii) it achieves $\sim$25\% improvement over HEAL-T. The difference between HEAL and other algorithms is not significant in the Facebook (FB) and MySpace (MYS) networks, as HEAL is already influencing almost all nodes in these two relatively small networks. Thus, in experiments to come, we focus more on the VE and HD networks. 


Figure \ref{fig:Runtime} shows the run-time of all algorithms on the four networks. The X-axis shows the four networks and the Y-axis (in log scale) shows the run-time (in seconds). This figure shows that (i) \textit{HEAL achieves a 100X speed-up over PSINET-W}; (ii) PSINET-W's run-time increases exponentially with increasing network sizes; (iii) HEAL runs 3X slower than HEAL-T but achieves 25\% more \textit{Indirect Influence}. Hence, HEAL is our algorithm of choice.


Next, we check if PSINET-W's run-times become worse on larger networks. Because of lack of larger real-world datasets, we create relatively large artificial Watts-Strogatz networks (model parameters $p=0.1,k=7$). Figure \ref{fig:big-runtime} shows the run-time of all algorithms on Watts-Strogatz networks. The X-axis shows the size of networks and the Y-axis (in log scale) shows the run-time (in seconds). This figure shows that \textit{PSINET-W fails to scale beyond 180 nodes, whereas HEAL runs within 5 minutes}. Thus, PSINET-W fails to scale-up to network sizes that are of importance to us. 

\begin{figure}[t]
\vspace{-2pt}
\subfloat[\small VE Network]{\includegraphics[height=0.9in,width=0.5\columnwidth]{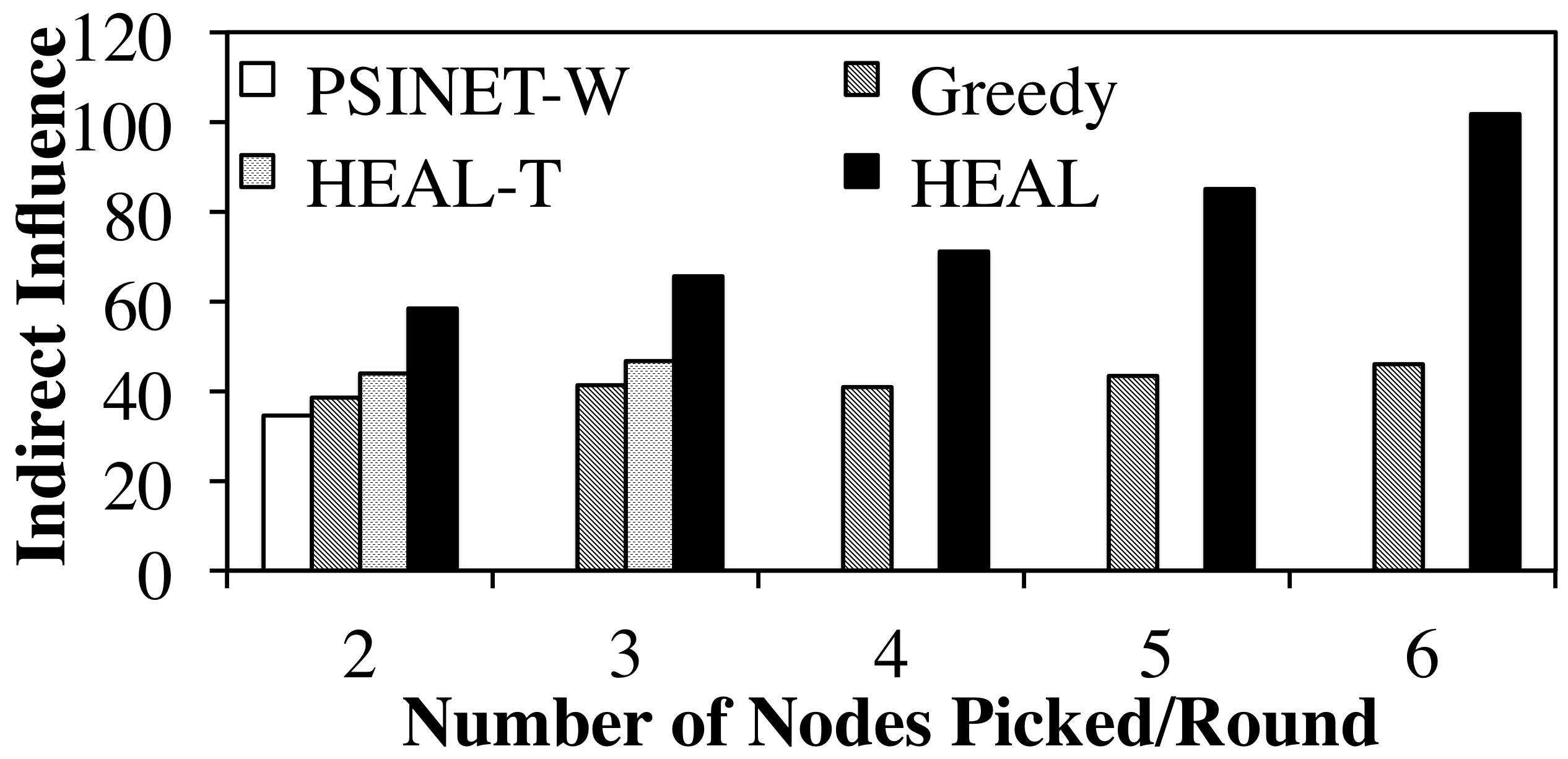}\label{fig:VeniceNet}}
\subfloat[\small HD Network]{\includegraphics[height=0.9in,width=0.5\columnwidth]{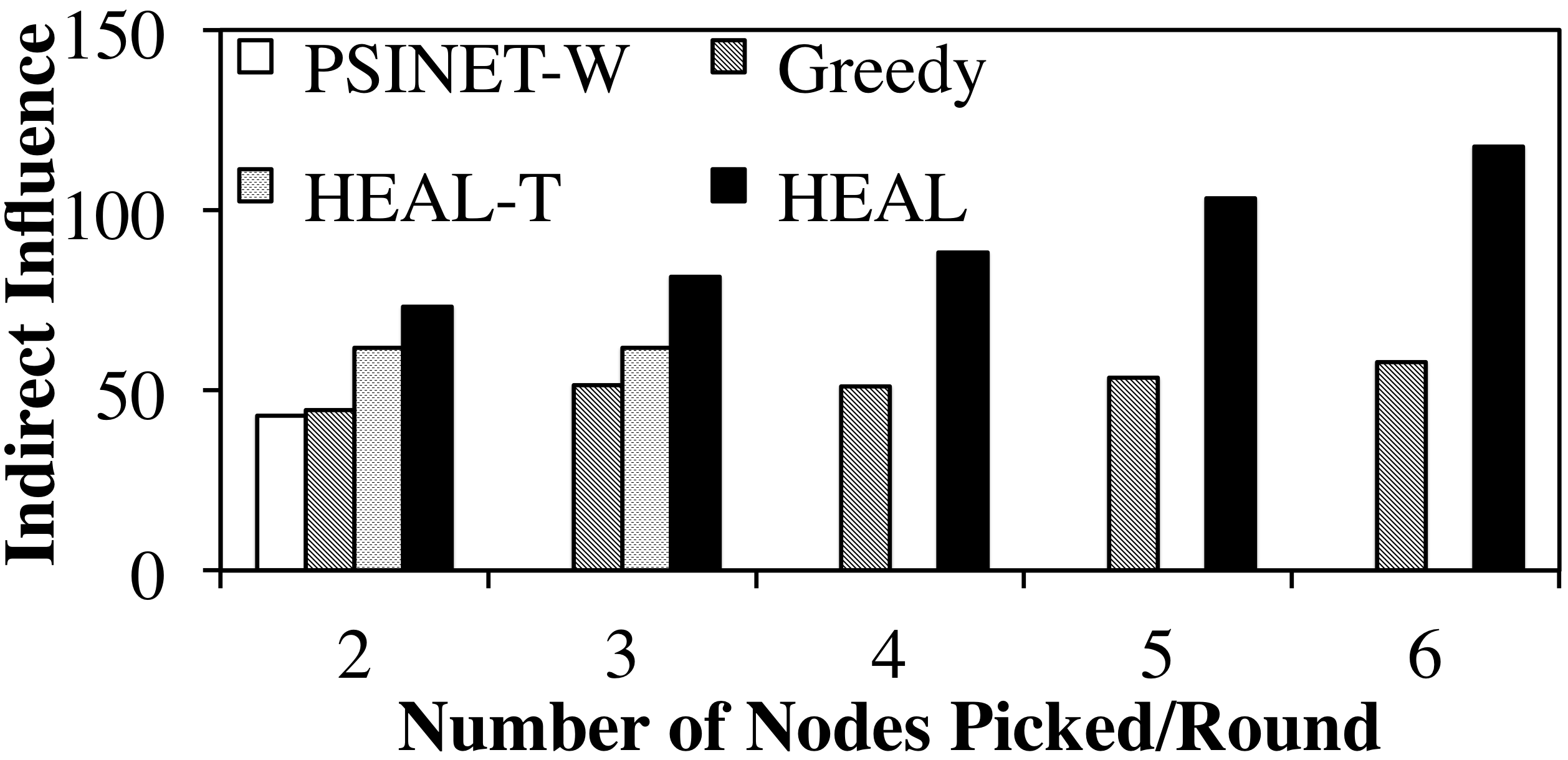}\label{fig:HollywoodNet}}
\caption{Scale up in number of nodes picked per round}
\end{figure}

\textbf{Scale Up Results. } Not only does PSINET-W fail in scaling up to larger network sizes, it even fails to scale-up with increasing number of nodes picked per round (or $K$), on our real-world networks. Figures \ref{fig:VeniceNet} and \ref{fig:HollywoodNet} show the \textit{Indirect Influence} achieved by HEAL, HEAL-T, Greedy and PSINET-W on the VE and HD networks respectively ($T=5$), as we scale up $K$ values. The X-axis shows increasing $K$ values, and the Y-axis shows the \textit{Indirect Influence}. These figures show that (i) PSINET-W and HEAL-T fail to scale up  -- they cannot handle more than $K=2$ and $K=3$ respectively (thereby not fulfilling real world demands); (ii) HEAL outperforms all other algorithms, and the difference between HEAL and Greedy increases linearly with increasing $K$ values. Also, \textit{in the case of $K=6$, HEAL runs in less than $40.12$ seconds on the HD network and $34.4$ seconds on the VE network}.


Thus, Figures \ref{fig:SolQual}, \ref{fig:Runtime}, \ref{fig:VeniceNet} and \ref{fig:HollywoodNet} show that PSINET-W (the best performing algorithm from previous work) fails to scale up with increasing network nodes, and with increasing $K$ values. Even for $K=2$ and moderate sized networks, it runs very slowly. Moreover, HEAL is the best performing algorithm that runs quickly, provides high-quality solutions, and can scale-up to real-world demands. Since only HEAL and Greedy scale up to $K=6$, we now analyze their performance in detail.

\begin{figure}[t]
\subfloat[\small Solution Quality]{\includegraphics[height=0.9in,width=0.5\columnwidth]{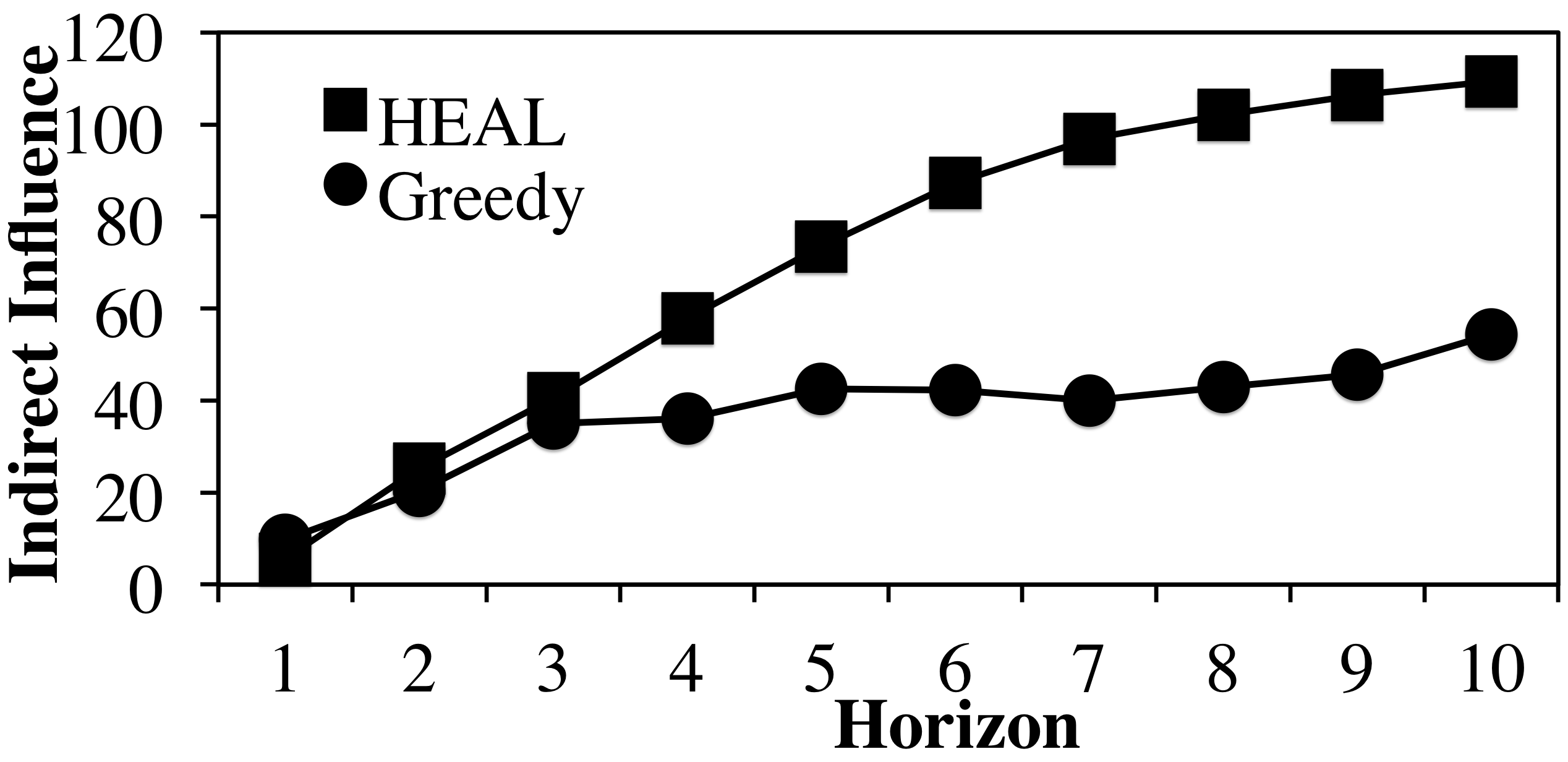}\label{fig:Horizon-HollySolQual}}
\subfloat[\small Maximum Relative Gain]{\includegraphics[height=0.9in,width=0.5\columnwidth]{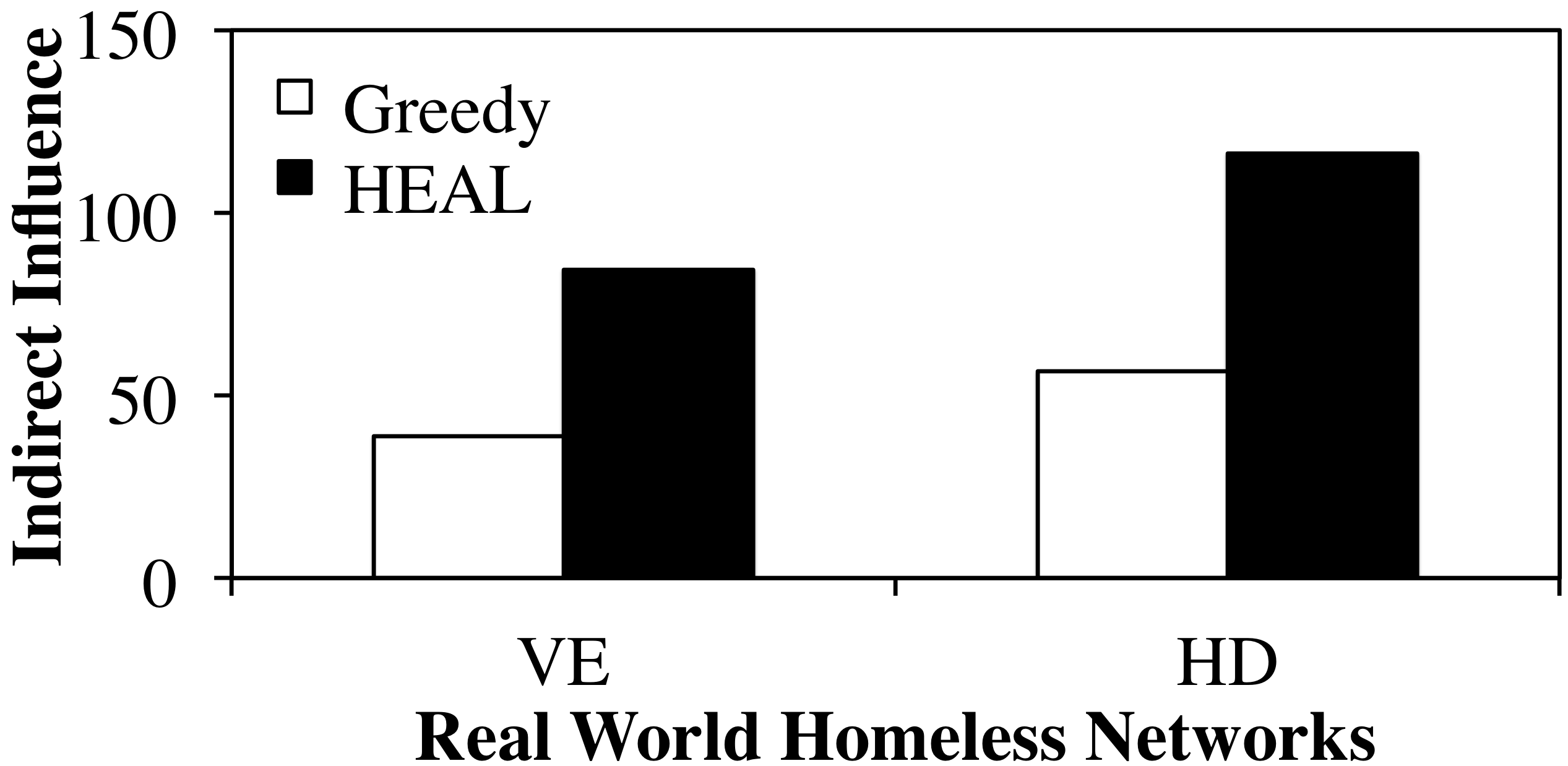}\label{fig:ws-SolQual}}
\caption{Horizon Scale up \& Maximum Gain on HD Network}
\end{figure}

\textbf{Scaling up Horizon. }Figure \ref{fig:Horizon-HollySolQual} shows HEAL and Greedy's \textit{Indirect Influence} in the HD network, with varying horizons (see appendix for VE network results). The X-axis shows increasing horizon values and the Y-axis shows the \textit{Indirect Influence} ($K=2$). This figure shows that the relative difference between HEAL and Greedy increases significantly with increasing $T$ values.

Next, we scale up $K$ values with increased horizon settings to find the maximum attainable solution quality difference between HEAL and Greedy. Figure \ref{fig:ws-SolQual} shows the \textit{Indirect Influence} achieved by HEAL and Greedy (with $K=4$ and $T=10$) on the VE and HD networks. The X-axis shows the two networks and the Y-axis shows the \textit{Indirect Influence}. This figure shows that with these settings, \textit{HEAL achieves $\sim$110\% more Indirect Influence than Greedy (i.e., more than a 2-fold improvement) in the two real-world networks}.


\begin{figure}[t]
\CenterFloatBoxes
\resizebox{\columnwidth}{!}{
\subfloat[\small Deviation Tolerance]{\raisebox{-.43\height}{\includegraphics[height=0.8in,width=0.5\columnwidth]{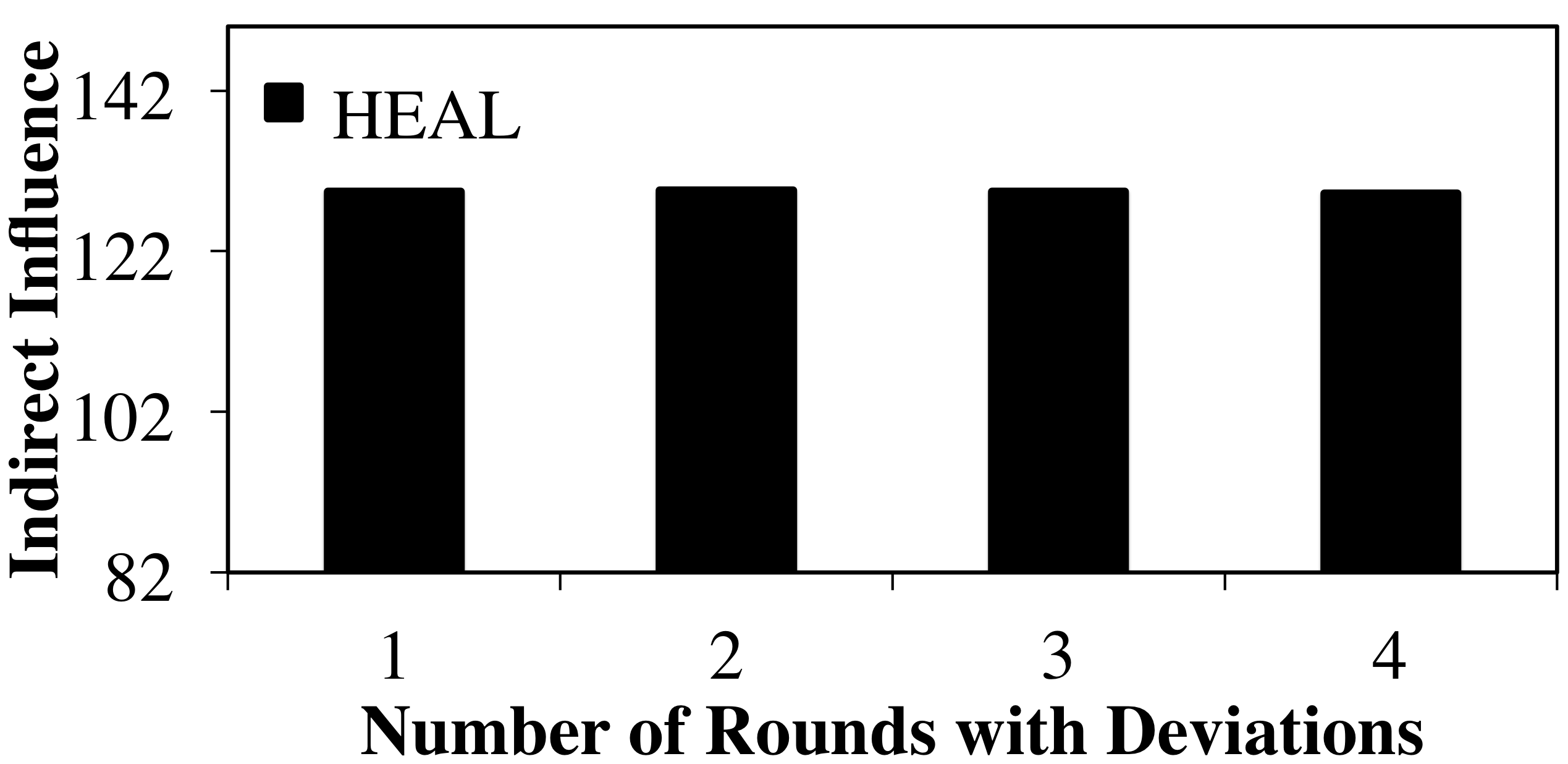}\label{fig:ws-deviation}}}
\subfloat[\small \% Increase over Greedy]{
\resizebox{1.5in}{0.4in}{
\begin{tabular}{|l|c|c|c|}\hline
\theadfont\diagbox[width=4em]{$p(e)$}{$u(e)$}&
\thead{$0.1$}&\thead{$0.2$}&\thead{$0.3$}\\    \hline
$0.7$ & $45.62$ & $44.37$ & $30.85$ \\    \hline
$0.6$ & $48.95$ & $24.56$ & $30$ \\    \hline
$0.5$ & $29.5$ & $55.18$ & $28.21$ \\    \hline
\end{tabular}\label{tab:inc-over-greedy}}}
}
\caption{Deviation Tolerance \& HEAL vs Greedy}
\end{figure}


\textbf{HEAL vs Greedy. }Figure \ref{tab:inc-over-greedy} shows the percentage increase (in \textit{Indirect Influence}) achieved by HEAL over Greedy with varying $u(e)$/$p(e)$ values. The columns and rows of Figure \ref{tab:inc-over-greedy} show varying $u(e)$ and $p(e)$ values respectively. The values inside the table cells show the percentage increase (in \textit{Indirect Influence}) achieved by HEAL over Greedy when both algorithms plan using the same $u(e)$/$p(e)$ values. For example, with $p(e)=0.7$ and $u(e)=0.1$, HEAL achieves 45.62\% more \textit{Indirect Influence} than Greedy. This figure shows that \textit{HEAL continues to outperform Greedy across varying $u(e)$/$p(e)$ values}. Thus, on a variety of settings, HEAL dominates Greedy in terms of both \textit{Indirect Influence} and run-time.

\textbf{Deviation Tolerance. } We show HEAL's tolerance to deviation by replacing a fixed number of actions recommended by HEAL with randomly selected actions. Figure \ref{fig:ws-deviation} shows the variation in \textit{Indirect Influence} achieved by HEAL ($K=4$,$T=10$) with increasing number of random deviations from the recommended actions. The X-axis shows increasing number of deviations and the Y-axis shows the \textit{Indirect Influence}. For example, when there were 2 random deviations (i.e., two recommended actions were replaced with random actions), HEAL achieves 100.23 \textit{Indirect Influence}. This figure shows that HEAL is highly deviation-tolerant.


\begin{figure}[t]
\CenterFloatBoxes
\resizebox{\columnwidth}{!}{
\subfloat[\small Artificial Networks]{\raisebox{-.43\height}{\includegraphics[height=0.8in,width=0.5\columnwidth]{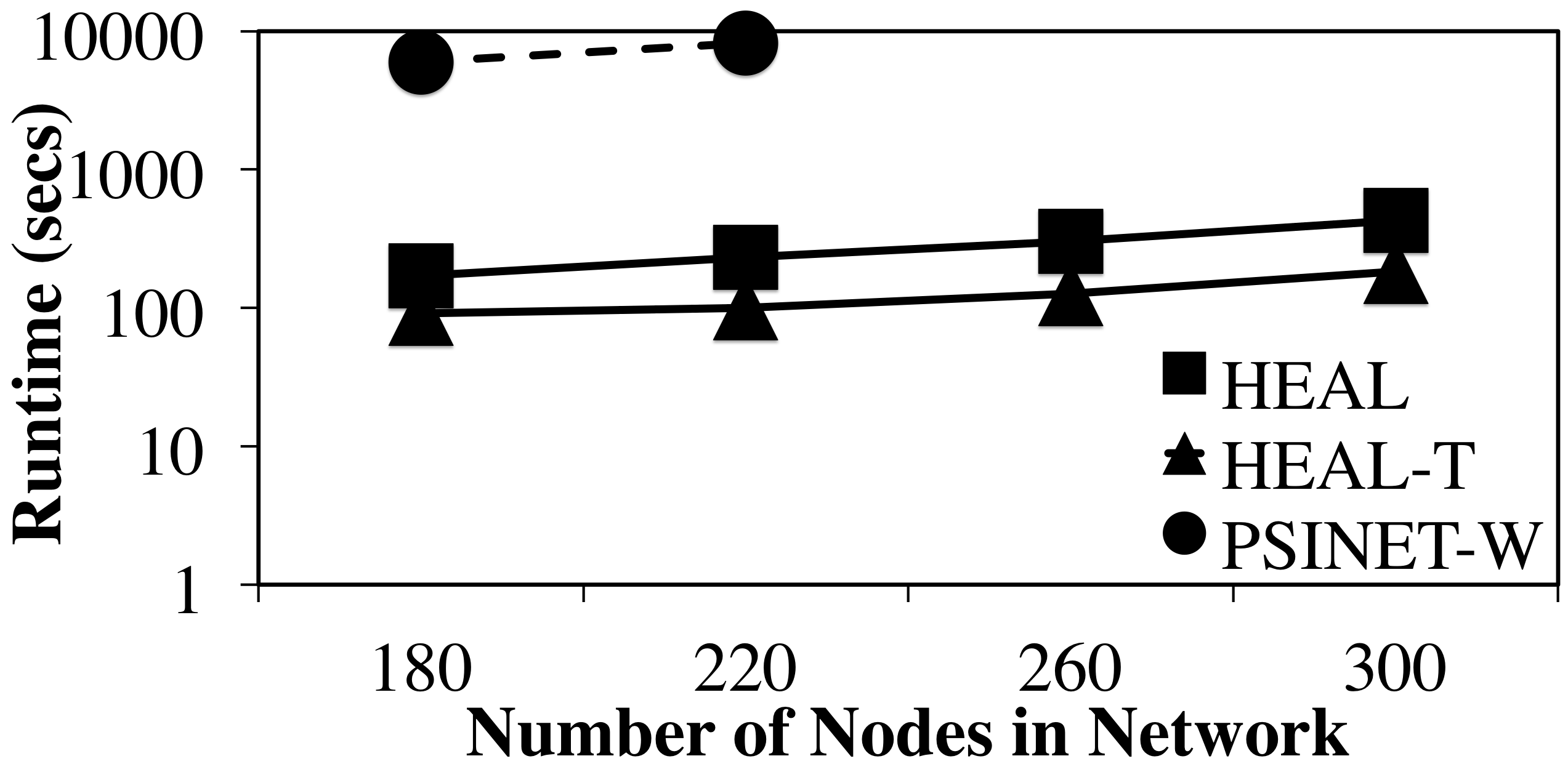}\label{fig:big-runtime}}}
\subfloat[\small \% Loss in HEAL Solution]{
\resizebox{1.5in}{0.4in}{
\begin{tabular}{|l|c|c|c|}\hline
\theadfont\diagbox[width=4em]{$p(e)$}{$u(e)$}&
\thead{$0.1$}&\thead{$0.2$}&\thead{$0.3$}\\    \hline
$0.7$ & $24.42$ & $21.02$ & $16.85$ \\    \hline
$0.6$ & $0.0$ & $18.26$ & $12.46$ \\    \hline
$0.5$ & $11.58$ & $10.53$ & $8.11$ \\    \hline
\end{tabular}\label{tab:sensitive}}}
}
\caption{Artificial Networks And Sensitivity Analysis}
\end{figure}



\textbf{Sensitivity Analysis. } Finally, we test the robustness of HEAL's solutions in the HD network (see appendix for VE network results), by allowing for error in HEAL's understanding of $u(e)$/$p(e)$ values. We consider the case that $u(e)=0.1$ and $p(e)=0.6$ values that HEAL plans on, are wrong. Thus, HEAL plans its solutions using $u(e)=0.1$ and $p(e)=0.6$, but those solutions are evaluated on different (correct) $u(e)$/$p(e)$ values to get \textit{estimated solutions}. These \textit{estimated solutions} are compared to \textit{true solutions} achieved by HEAL if it planned on the correct $u(e)$/$p(e)$ values. Figure \ref{tab:sensitive} shows the percentage difference (in \textit{Indirect Influence}) between the \textit{true} and \textit{estimated} solutions, with varying $u(e)$ and $p(e)$ values. For example, when HEAL plans its solutions with wrong $u(e)=0.1$/$p(e)=0.6$ values (instead of correct $u(e)=0.3$/$p(e)=0.5$ values), it suffers a 8.11\% loss. This figure shows that HEAL is relatively robust to errors in its understanding of $u(e)$/$p(e)$ values, as it only suffers an average-case loss of $\sim$ 15\%.


\section{Conclusion}
This paper focuses on the important problem of selecting participants of sequentially deployed interventions, which are organized by homeless shelters to spread awareness about HIV prevention practices among homeless youth. This is an extremely important problem as homeless youth are at high-risk to HIV ($\sim$10\% of homeless youth are HIV positive). While previous work tries to solve this problem, it \textit{simply fails to scale up to real world sizes and demands}. It runs out of memory on large networks, with increased number of intervention participants, and runs very slowly on moderate sized networks. In this paper, we develop HEALER, a new software agent for solving this problem which scales up to real world demands. HEALER casts the problem as a POMDP and uses a completely novel suite of algorithms (HEAL, TASP and Evaluate) to achieve a 100X speedup over state-of-the-art algorithms while outperforming them by 70\% in terms of solution quality. More than that, it runs when previous algorithms can't scale up. Also, HEALER saves homeless shelters' thousands of dollars and many months of time by generating uncertain networks at low cost using its Facebook application. Finally, we show some novel theoretical hardness results about the problem that HEALER solves. HEALER is fully ready to be deployed in the real world, in collaboration with a homeless shelter. The shelter officials have tested HEALER's components and their feedback has been positive. HEALER's deployment is expected to commence in early Spring 2016. This is an extended version of our AAMAS 2016 paper by the same name. For the conference version, please refer to \cite{yadav2016dime}.

\bibliography{writeup}
\bibliographystyle{abbrv}
\end{document}